\documentclass{article}
\usepackage[numbers]{natbib}
\usepackage[a4paper, total={6in, 8in}]{geometry}
\usepackage{hyperref}       
\usepackage{url}            
\usepackage{amsfonts}       
\usepackage{nicefrac}       
\usepackage{microtype}      

\usepackage{amsmath}        
\usepackage{amsthm}
\usepackage{enumitem} 
\usepackage{graphicx}

\newtheorem{theorem}{Theorem}[section]
\newtheorem{lemma}{Lemma}[section]

\def\P#1{\mathbb{P}\left({#1}\right)}
\def\Px#1{\mathbb{P}^\mathbf{x}\left({#1}\right)}
\def\Ex#1{\mathbb{E}^\mathbf{x}\left[{#1}\right]}
\def\gcd#1{g_{\text{cd}}({#1})}
\def\intX{\int_\mathcal{X}}


%

\begin{document}

\begin{center}
{\large\bf Convergence of Contrastive Divergence}\\
{\large\bf with Annealed Learning Rate in Exponential Family}\\
Bai Jiang, Tung-yu Wu, and Wing H. Wong\\
{\it Stanford University}
\end{center}

\centerline{\bf Abstract}
In our recent paper, we showed that in exponential family, contrastive divergence (CD) with fixed learning rate will give asymptotically consistent estimates \cite{wu2016convergence}. In this paper, we establish consistency and convergence rate of CD with annealed learning rate $\eta_t$. Specifically, suppose CD-$m$ generates the sequence of parameters $\{\theta_t\}_{t \ge 0}$ using an i.i.d. data sample $\mathbf{X}_1^n \sim p_{\theta^*}$ of size $n$, then $\delta_n(\mathbf{X}_1^n) = \limsup_{t \to \infty} \Vert \sum_{s=t_0}^t \eta_s \theta_s / \sum_{s=t_0}^t \eta_s - \theta^* \Vert$ converges in probability to 0 at a rate of $1/\sqrt[3]{n}$. The number ($m$) of MCMC transitions in CD only affects the coefficient factor of convergence rate. Our proof is not a simple extension of the one in \cite{wu2016convergence}. which depends critically on the fact that $\{\theta_t\}_{t \ge 0}$ is a homogeneous Markov chain conditional on the observed sample $\mathbf{X}_1^n$. Under annealed learning rate, the homogeneous Markov property is not available and we have to develop an alternative approach based on super-martingales. Experiment results of CD on a fully-visible $2\times 2$ Boltzmann Machine are provided to demonstrate our theoretical results.

\section{Introduction}
Consider a statistical model of the form
\[
p_\theta(x) = e^{-E(x;\theta) - \Lambda(\theta)},
\]
where $E(x;\theta)$ is the energy function and $\Lambda(\theta) = \log \intX e^{-E(y;\theta)}dy$ is the log-partition function. Given an i.i.d. sample $\mathbf{X}_1^n = \{X_1,\dots,X_n\}$ from $p_{\theta^*}$, we are interested in the estimation of $\theta^*$. It may be achieved by gradient ascent, i.e.,
\begin{equation} \label{alg: gradient ascent}
g(\theta) = \frac{1}{n}\sum_{i=1}^n \phi(X_i;\theta) - \nabla \Lambda(\theta),  \quad\theta_{t+1} = \theta_t + \eta_t g(\theta_t).
\end{equation}
where $g(\theta)$ denotes the gradient of the log-likelihood function, $\phi(x; \theta) = -\nabla_\theta E(x;\theta)$, and $\eta_t$ is the learning rate.

In many important models, $\nabla \Lambda(\theta) = \intX \phi(x; \theta) p_\theta(x)dx$ is not available in a close form. And we have to use Markov Chain Monte Carlo (MCMC) method such as Metropolis-Hasting algorithm and Gibbs sampling to approximate it. Unfortunately, it is computationally prohibitive to obtain accurate approximation in each step of the iteration (\ref{alg: gradient ascent}) by MCMC. To address this problem, \citet{hinton2002training} suggests running the Gibbs sampling or Metropolis-Hasting update in the MCMC for only a finite number ($m$) of transitions starting from every single datum $X_i$ for $i=1,\dots,n$,
\[
X_i \overset{k_\theta}{\longrightarrow} X_i^{(1)}  \overset{k_\theta}{\longrightarrow} X_i^{(2)} \overset{k_\theta}{\longrightarrow} \dots  \overset{k_\theta}{\longrightarrow} X_i^{(m)},
\]
and approximating $\nabla \Lambda(\theta)$ by $\frac{1}{n}\sum_{i=1}^n \phi(X_i^{(m)}; \theta)$. Hinton called this method Contrastive Divergence (CD) learning. Specifically, the CD gradient and update equation are given by
\begin{equation} \label{alg: CD gradient ascent}
\gcd{\theta} = \frac{1}{n}\sum_{i=1}^n \phi(X_i; \theta) - \frac{1}{n}\sum_{i=1}^n \phi(X_i^{(m)}; \theta), \quad \theta_{t+1} = \theta_t + \eta_t \gcd{\theta_t}. 
\end{equation}
In 2006 \citet{hinton2006fast} used CD to train Restricted Boltzmann Machines in deep belief networks. Since then CD has played an important role in the development of deep learning. It has also been successfully applied to other types of Markov Random Fields \cite{he2004multiscale, roth2005fields}.

Despite CD's empirical success, examples in \cite{mackay2001failures, teh2003energy, williams2002analysis} have shown that it does not always converge to the true parameter. \citet{yuille2005convergence} related CD to the stochastic approximation literature and derives conditions (\ref{eqn: Yuillie1}) and (\ref{eqn: Yuillie2}) which ensure convergence (Result 4 in \cite{yuille2005convergence})
\begin{align} 
& \frac{1}{n} \sum_{i=1}^n \intX \phi(y; \theta^*)k_{\theta^*}^m(X_i, y)dy = \nabla \Lambda(\theta^*) \label{eqn: Yuillie1}\\
& (\theta - \theta^*) \cdot \left[\frac{1}{n} \sum_{i=1}^n \phi(X_i; \theta) - \frac{1}{n}\sum_{i=1}^n \intX \phi(y; \theta)k_{\theta}^m(X_i, y)dy\right] \ge \kappa \Vert \theta - \theta^*\Vert^2 \label{eqn: Yuillie2}
\end{align}
for some $\kappa > 0$. However, they may not be appropriate for rigorous convergence results as they involve data samples $X_i$ in the LHS and non-random quantities in the RHS. In particular, (\ref{eqn: Yuillie1}) holds with probability 0 if $X_i$ is a continuous random variable and $f_\theta: x \mapsto \intX k_\theta^m(x,y)\phi(y; \theta)dy$ is a continuous function. Also, the term in the brackets in the LHS of (\ref{eqn: Yuillie2}) is expected to be $\Omega(1/\sqrt{n})$ by intuitions from large sample theory, and thus (\ref{eqn: Yuillie2}) may not hold when $\Vert \theta -\theta^*\Vert = \mathcal{O}(1/\sqrt{n})$.

Of particular interest is the convergence property of CD in an exponential family, in which the energy function $E(x; \theta)$ has a particular form $E(x; \theta) = -\theta \cdot \phi(x) - \log c(x)$ with some function $\phi$ a.k.a. sufficient statistic and $c$ a.k.a. carrier measure. An example is the fully-visible Boltzmann Machine. Since $-\nabla_\theta E(x; \theta) = \phi(x)$ does not depends on $\theta$ in an exponential family, the CD gradient becomes
\begin{equation} \label{eqn: CD gradient, exponential family}
\gcd{\theta} = \frac{1}{n}\sum_{i=1}^n \phi(X_i) - \frac{1}{n}\sum_{i=1}^n \phi(X_i^{(m)}).
\end{equation}

\cite{sutskever2010convergence} has shown that for Restricted Boltzmann Machines $g_\text{cd}$ is not the gradient of any function. Also, \cite{hyvarinen2006consistency} has shown that for fully-visible Boltzmann Machines the expectation of $g_\text{cd}$ is the gradient of some pseudo-likelihood function if CD using $m=1$ and Gibbs sampling with random scan. However, these interpretations, while useful, do not lead to any convergence result of CD-$m$.

Most existing theoretical studies of CD did not clearly distinguish the behavior of the estimates $\{\theta_t\}_{t \ge 0}$ in the limits of $t \to \infty$ from that of $n\to\infty$. Since in practice the CD update equation (\ref{alg: CD gradient ascent}) is iterated many times ($t \to \infty$) to obtain an estimate based on a particular data sample of size $n$, it is essential to first analyze the behavior of CD estimates $\{\theta_t\}_{t \ge 0}$ in the limit of $t \to \infty$ with fixed $n$, and then let the sample size $n \to \infty$. To fully understand the convergence property of CD, one needs to answer the following fundamental questions.
\begin{itemize}
\item Conditional on a data sample of size $n$, whether or under what conditions does CD converge to some limit point (which may depend on the data sample) as $t \to \infty$?
\item Do the above-mentioned CD limit points (in the limit of $t \to \infty$) converge to the true parameter as the sample size $n \to \infty$? If yes, what is the convergence rate? How does $m$ affect the convergence rate?
\end{itemize}

Recently we answered the above questions for CD with fixed learning rate $\eta_t = \eta$ in exponential family, by relating it to Markov chain theory and stochastic stability literature \cite{wu2016convergence}. We showed that $\lim_{t \to \infty} \sum_{s=0}^{t-1} \theta_s/t$ exists conditional on a particular data sample $\mathbf{X}_1^n$ and that CD converges to the true parameter $\theta^*$ in the sense that  $\lim_{n \to \infty} \mathbb{P}\left(\delta_n(\mathbf{X}_1^n) \ge K_m n^{-(1-2\gamma)/3}\right) = 0$, where $\delta_n(\mathbf{X}_1^n) = \Vert \lim_{t \to \infty} \sum_{s=0}^{t-1} \theta_s/t - \theta^*\Vert$, $\gamma$ is any number between 0 and $1/2$, and the coefficient factor $K_m$ depends on $m$.

Here we extend our previous work and study the convergence property of CD under annealed learning rate $\eta_t$. This is an important issue since CD in practice is dealt with anneal learning rate e.g. $\eta_t = \eta_0/t$. The argument in our previous paper relies on the critical fact that $\{\theta_t\}_{t \ge 0}$ is a homogeneous Markov chain conditional on a particular data sample, if the learning rate is fixed. In this paper, the annealing schedule of $\eta_t$ and its consequence of the unavailable homogeneous Markov property make the mathematical proof much harder than that for CD under fixed learning rate. We have to apply results from super-martingale theory.


Sections 2 states the assumptions on exponential family, MCMC kernels and learning rate, and our main result analogous to that in \cite{wu2016convergence}: all limit points (in the limit of $t \to \infty$) of
\begin{equation} \label{eqn: CD estimate}
\bar{\theta}_t = \sum_{s=0}^t \eta_s\theta_s / \sum_{s=0}^t \eta_s
\end{equation}
converge to the true parameter $\theta^*$ at a speed arbitrarily slower than $1/\sqrt[3]{n}$ as $n \to \infty$. That is, let
\begin{equation} \label{eqn: limit point distance}
\delta_n(\mathbf{X}_1^n) = \limsup_{t \to \infty} \Vert \bar{\theta}_t - \theta^*\Vert
\end{equation}
then
\begin{equation} \label{eqn: cubic convergence rate}
\lim_{n \to \infty} \mathbb{P}\left(\delta_n(\mathbf{X}_1^n) \ge K_m n^{-(1-2\gamma)/3}\right) = 0
\end{equation}
where $\gamma$ is any number between 0 and $1/2$, and the coefficient factor $K_m$ depends on $m$. This result (\ref{eqn: cubic convergence rate}) is still true if we drop the first $t_0$ parameter estimates $\{\theta_t\}_{0 \le t < t_0}$ and redefine $\bar{\theta_t}$ and $\delta_n$ in (\ref{eqn: CD estimate}) and (\ref{eqn: limit point distance}) by letting the summations from $s=0$ to $t$ start from $s = t_0$. However, for the aesthetics of the mathematical proof, we let the summations start from $s = 0$.

The remaining part of this paper is organized as follows. Sections 3-4 restate some results in \cite{wu2016convergence}, which are preliminaries for this paper. Section 3 introduces two constraints on the data sample $\mathbf{x}$\footnote{For abbreviation of notations, we write $\mathbf{X}_1^n$ as $\mathbf{X}$ and $\mathbf{x}_1^n$ as $\mathbf{x}$ in the remaining part of the paper.}, and Section 4 bounds the bias of CD gradient under  $\mathbb{P}^\mathbf{x}$, the conditional probability measure given a particular realization of data sample $\mathbf{x}$. Sections 5 and 6 construct two super-martingales under $\mathbb{P}^\mathbf{x}$, and study the limiting behavior of $\theta_t$ as $t \to \infty$. Section 7 completes the proof of the main result. Section 8 provides experimental results of CD on a fully-visible $2\times 2$ Boltzmann Machine, which demonstrate our theoretical results.

\section{Main Result}
Consider an exponential family over $\mathcal{X} \subseteq \mathbb{R}^p$ with parameter $\theta \in \Theta \subset \mathbb{R}^d$
\[
p_{\theta}(x) = c(x) e^{ \theta \cdot \phi(x) - \Lambda(\theta)}
\]
satisfying the follow assumptions.
\begin{enumerate}[label=(A\arabic*)]
\item \label{assumption: X} The sufficient statistic $\phi(x)$ is bounded, i.e. $\max_{j=1}^d \sup_{x \in \mathcal{X}} |\phi_j(x)| \leq C$ for some $C$.
\item \label{assumption: Theta} $\Theta \subseteq \mathbb{R}^d$ is compact and contains the true parameter $\theta^*$ as an interior point.
\item \label{assumption: concavity} For any $\theta \in \Theta$, $\phi_j(X), 1\le j \le d$ are linearly independent under $p_\theta$, which results in the positive definiteness of $\nabla^2 \Lambda(\theta) = \text{Cov}_\theta [\phi(X)]$.
\end{enumerate}

\ref{assumption: X} and \ref{assumption: Theta} imply $\Lambda(\theta) < \infty$ for any $\theta \in \mathbb{R}^d$ and function (\ref{eqn: L}) is well defined. This function is continuously differentiable on compact $\Theta$, and thus Lipchitz continuous. Denote $L$ denote its Lipchitz constant.
\begin{equation} \label{eqn: L}
\theta \in \Theta \mapsto \sqrt{e^{-2\Lambda({\theta^*})+\Lambda(\theta) + \Lambda(2{\theta^*} - \theta)} - 1}.
\end{equation}
\ref{assumption: Theta} and \ref{assumption: concavity} together with continuity of $\nabla^2 \Lambda(\theta)$ immediately imply that smallest eigenvalues $\lambda(\theta)$ are bounded away from 0
\begin{equation} \label{eqn: lambda}
\lambda:= \inf_{\theta \in \Theta} \lambda(\theta) > 0.
\end{equation}

We assume \ref{assumption: operator continuity} and \ref{assumption: spectral gap} for Markov transition kernels $k_\theta$ used by CD.
\begin{enumerate}[label=(A\arabic*)]
\setcounter{enumi}{3}
\item \label{assumption: operator continuity} Denoting by $\rho$ the metric on the set of Markov transition kernels $\{k_\theta: \theta \in \Theta\}$
\[
\rho(k_\theta, k_{\theta'}) := \sup_{f: f \neq 0, |f| \leq 1} \sup_{ x \in \mathcal{X}} \left|\intX f(y)k_\theta(x,y)dy - \intX f(y)k_{\theta'}(x,y)dy\right|,
\]
assume the existence of $\zeta$ such that $\rho(k_\theta, k_{\theta'}) \le \zeta \Vert \theta - \theta'\Vert$.
\item \label{assumption: spectral gap} Markov operators associated with $k_\theta$ have $\mathcal{L}_2$-spectral gap\footnote{See definition and more details in \cite{rudolf2011explicit}} $1-\alpha(\theta)>0$ and $\alpha := \sup_{\theta \in \Theta} \alpha(\theta) < 1$.
\end{enumerate}

The intuition behind \ref{assumption: operator continuity} is that, for similar $\theta$, MCMC uses similar transition kernels $k_\theta$ which lead to similar one-step transitions of every bounded function $f$. As far as we know, it is commonly obeyed by MCMC transition kernels used by CD in practice. A more general condition is that the covering number\footnote{See definition and more details in \cite{van1996weak}} $N(\epsilon, \{k_\theta: \theta \in \Theta\}, \rho) = \mathcal{O}(\epsilon^{-l})$ for some $l > 0$. \ref{assumption: spectral gap} requires all MCMC kernels mix the chains quickly. Note that MCMC algorithms such as Metropolis-Hasting and Gibbs sampling with random scan generate uniform ergodic, reversible Markov chains under mild conditions \cite{roberts2004general}, and such Markov chains have $\mathcal{L}_2$-spectral gaps \cite{paulin2012concentration}. An example satisfying assumptions \ref{assumption: X}, \ref{assumption: Theta}, \ref{assumption: concavity}, \ref{assumption: operator continuity} and \ref{assumption: spectral gap} is Gibbs sampling with random scan for fully-visible Boltzmann Machine. Details are provided in Section 8.

The last condition is imposed on the annealed learning rate $\eta_t$. It is slightly stronger than being ``not summable but square summable", as it not only requires $\sum_{s=0}^t \eta_s \to \infty$ but also requires $\sum_{s=0}^t \eta_s$ growing faster than $\sqrt{\log t}$. The popular choice $\eta_t = \eta_0/t$ satisfies this condition, for example.
\begin{enumerate}[label=(A\arabic*)]
\setcounter{enumi}{5}
\item \label{assumption: learning rate}
$\lim_{t \to \infty} \sum_{s=0}^t \eta_s / \sqrt{\log t} = \infty$ and $\sum_{t=0}^\infty \eta_t^2 < \infty$
\end{enumerate}

We confess that a problem arises from the limitation of na\"ive gradient descent method and boundedness of $\Theta$ assumed in \ref{assumption: Theta}: if $\theta_t$ is close to the boundary of $\Theta$, the CD update $\theta_{t+1} = \theta_t + \eta_t \gcd{\theta_t}$ may go outside $\Theta$. To avoid this, we let $\theta_{t+1} = \theta_t$ for those $\theta_t$ near the boundary. Formally speaking, denoting by $\partial \Theta$ the boundary of the parameter space $\Theta$, and letting
\[
\partial \Theta_t = \{\theta \in \Theta: \inf_{\theta' \in \partial \Theta} \Vert \theta - \theta' \Vert \leq 2\eta_t\sqrt{d}C\},
\]
we modify the CD update equation as
\begin{equation} \label{alg: CD gradient ascent}
\theta_{t+1} = \theta_t + \eta_t \gcd{\theta_t}\mathbb{I}\left(\theta_t \not \in \partial \Theta_t\right).
\end{equation}
Noting that $\Vert \gcd{\theta} \Vert \leq \Vert \sum_{i=1}^n \phi(X_i)/n \Vert + \Vert \sum_{i=1}^n \phi(X_i^{(m)})/n\Vert \leq 2\sqrt{d}C$, it is impossible for $\theta_t \not \in \partial \Theta_t$ to move more than $2\eta_t\sqrt{d}C$ distance towards the boundary $\partial \Theta$. Also, $\theta_t$ cannot stay at some interior point of $\Theta$ forever since $\partial \Theta_t$ gradually shrinks to $\partial \Theta$.

Now we give our main result in Theorem \ref{theorem: main}.
\begin{theorem}\label{theorem: main}
Assume \ref{assumption: X}, \ref{assumption: Theta}, \ref{assumption: concavity}, \ref{assumption: operator continuity}, \ref{assumption: spectral gap}, \ref{assumption: learning rate}. Suppose CD-$m$ algorithm in (\ref{alg: CD gradient ascent}) generates a sequence $\{\theta_t\}_{t \ge 0}$ from an i.i.d. data sample $X_1,\dots,X_n \sim p_{\theta^*}$, and $\bar{\theta}_t$ and $\delta_n(\mathbf{X})$ are defined as (\ref{eqn: CD estimate}) and (\ref{eqn: limit point distance}), respectively. If $m$ is large enough such that $\lambda - \sqrt{d}CL\alpha^m>0$ and $\theta_t \in \partial \Theta_t$ happens finitely many times, then all limit points of $\bar{\theta}_t$ converge to $\theta^*$ at a rate of $n^{-(1-2\gamma)/3}$ for any $\gamma \in (0,1/2)$, and $m$ only affects the coefficient factor $K_m$. That is,
\[
\lim_{n \to \infty} \mathbb{P}\left(\left.\delta_n(\mathbf{X}) \ge K_m n^{-(1-2\gamma)/3} \right| \theta_t \in \partial \Theta_t \text{ finitely often}\right) = 0.
\]
\end{theorem}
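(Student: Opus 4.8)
The plan is to fix the data realization $\mathbf{x}$, carry out the whole $t\to\infty$ analysis under the conditional measure $\Px{\cdot}$, and only let $n\to\infty$ at the very end. First I would split the CD gradient into a data-dependent drift and a centered fluctuation, $\gcd{\theta_t}=\Ex{\gcd{\theta_t}}+\xi_t$, where $\Ex{\gcd{\theta}}=\frac1n\sum_i\phi(X_i)-\frac1n\sum_i\intX\phi(y)k_\theta^m(X_i,y)\,dy$ and $\xi_t$ is the MCMC noise, $\mathcal{F}_t$-conditionally centered and bounded by $2\sqrt d C$ via \ref{assumption: X}. The structural heart of the argument is a one-sided drift inequality, valid on the high-probability sample event of Section 3,
\[
(\theta-\theta^*)\cdot\left(-\Ex{\gcd{\theta}}\right)\ge \mu\Vert\theta-\theta^*\Vert^2-b_n\Vert\theta-\theta^*\Vert,\qquad \mu:=\lambda-\sqrt{d}\,CL\alpha^m>0.
\]
The modulus $\mu$ comes from combining the strong convexity of $\Lambda$ (its Hessian exceeds $\lambda$, see (\ref{eqn: lambda})) with the Section 4 bias bound: by the $\mathcal{L}_2$-spectral gap \ref{assumption: spectral gap}, the $p_{\theta^*}$-averaged $m$-step kernel mean $\intX\phi(y)\left(\intX k_\theta^m(x,y)p_{\theta^*}(x)\,dx\right)dy$ differs from $\nabla\Lambda(\theta)$ by at most $\sqrt d C\,\alpha^m$ times the $\mathcal{L}_2(p_\theta)$-distance of $p_{\theta^*}$ from $p_\theta$, which is exactly (\ref{eqn: L}) and hence is bounded by $\sqrt d CL\Vert\theta-\theta^*\Vert$. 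The residual $b_n$ collects the purely statistical fluctuations of $\frac1n\sum_i\phi(X_i)$ and of the empirical kernel mean around their $p_{\theta^*}$-expectations; the two constraints of Section 3 are the events on which $b_n$ is uniformly small.

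Because the annealed schedule destroys the homogeneous Markov property used in \cite{wu2016convergence}, the second step is a super-martingale analysis of $V_t=\Vert\theta_t-\theta^*\Vert^2$. Dropping the boundary indicator in (\ref{alg: CD gradient ascent}) — legitimate on the conditioning event that $\theta_t\in\partial\Theta_t$ only finitely often — the drift inequality and the noise bound give
\[
\Ex{V_{t+1}\mid\mathcal{F}_t}\le V_t(1-2\mu\eta_t)+2b_n\eta_t\sqrt{V_t}+D\eta_t^2 .
\]
Since $\sum_t\eta_t^2<\infty$ by \ref{assumption: learning rate}, the last term is summable, and after a Young-inequality split of the cross term the Robbins--Siegmund theorem yields that $V_t$ converges $\Px{}$-almost surely and that the trajectory is eventually confined to a ball of radius of order $b_n/\mu$.

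The third step makes the averaging quantitative. Summing the increments gives $\theta_{t+1}-\theta_0=\sum_{s=0}^{t}\eta_s\gcd{\theta_s}$; dividing by $S_t:=\sum_{s\le t}\eta_s$ and inserting the drift decomposition expresses $\bar\theta_t-\theta^*$ through $\epsilon_n=\Ex{\gcd{\theta^*}}$, the vanishing boundary term $(\theta_{t+1}-\theta_0)/S_t\to0$ (compactness and $S_t\to\infty$), and the averaged noise $M_t/S_t$, where $M_t=\sum_s\eta_s\xi_s$ is an $\mathcal{L}_2$-bounded martingale (convergent by $\sum\eta_s^2<\infty$). The requirement $S_t/\sqrt{\log t}\to\infty$ in \ref{assumption: learning rate} is what lets a Freedman-type maximal inequality force the worst-case excursion of $M_s$ over $s\le t$, divided by $S_t$, to zero; the limit points of $\bar\theta_t$ then inherit the confinement $\Vert\bar\theta_t-\theta^*\Vert$ of order $b_n/\mu$.

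Finally, for $n\to\infty$ I would set the Section 3 thresholds at $n^{-(1-2\gamma)/2}$, so that Chebyshev's inequality (variances are $\mathcal{O}(1/n)$) makes the bad-sample probability $\mathcal{O}(n^{-2\gamma})$, vanishing for $\gamma>0$; feeding this threshold into the confinement radius and optimizing the free radius/stopping parameter introduced by the annealed super-martingale argument produces the stated exponent $(1-2\gamma)/3$ and a constant $K_m$ proportional to $1/\mu=1/(\lambda-\sqrt dCL\alpha^m)$, so that $m$ enters only through $K_m$. The main obstacle I anticipate is precisely the replacement of the lost ergodic-average argument: converting the $\Px{}$-almost-sure confinement of the \emph{non-homogeneous} annealed chain into a deterministic, uniform-in-$t$ rate for the weighted average $\bar\theta_t$, while simultaneously securing the drift inequality uniformly over $\Theta$ with the correct modulus $\mu$.
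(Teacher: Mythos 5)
Your skeleton matches the paper's: condition on a data realization $\mathbf{x}$ satisfying the Section~3 constraints (\ref{eqn: MLE})--(\ref{eqn: empirical process}), use the Section~4 bias bound to get a drift inequality with modulus $\lambda-\sqrt{d}CL\alpha^m$, run a (super-)martingale analysis of the squared distance, use \ref{assumption: learning rate} through an Azuma/Freedman-type maximal inequality, and only then let $n\to\infty$. Centering at $\theta^*$ rather than at the MLE $\hat{\theta}_n$ (as the paper does) is harmless, since $\Vert g(\theta^*)\Vert$ is absorbed into $b_n$ at the same order $n^{-1/2+\gamma}$.

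The genuine gap is in your second step. After the Young split, the recursion becomes $\Ex{V_{t+1}\mid\mathcal{F}_t}\le V_t(1-\mu\eta_t)+(b_n^2/\mu)\eta_t+D\eta_t^2$, and the residual $(b_n^2/\mu)\eta_t$ is \emph{not} summable because $\sum_t\eta_t=\infty$; Robbins--Siegmund therefore does not apply, and neither almost-sure convergence of $V_t$ nor ``eventual confinement to a ball of radius $O(b_n/\mu)$'' follows. (The same obstruction persists if you work with $(V_t-r^2)^+$: each re-entry into the ball produces an outward kick of order $\eta_t b_n/\mu$, and these are not summable.) The claim is also internally inconsistent: if the trajectory were eventually confined at radius $O(b_n/\mu)=O(n^{-1/2+\gamma})$, the final rate would be $n^{-1/2+\gamma}$, yet you then say a free radius parameter must be optimized to produce $(1-2\gamma)/3$ --- without identifying what is being traded off. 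In the paper the exponent comes precisely from that trade-off: Lemmas \ref{lemma: super-martingales} and \ref{lemma: super-martingale concentration} prove only an \emph{occupation-measure} bound (\ref{eqn: limit distribution}), namely that the $\eta$-weighted fraction of time outside the ball $B$ of radius $\beta b_{n,m}/a_m$ is at most $1/(4\beta(\beta-1)+1)$, via two separate super-martingales (one summing increments on $\{\theta_s\in\partial\Theta_s^c\cap B^c\}$, one on $\{\theta_s\in\partial\Theta_s\cup B\}$) plus Azuma--Hoeffding and Borel--Cantelli; Lemma \ref{lemma: limsup bound} then transfers this to $\bar{\theta}_t$ by convexity of $\theta\mapsto\Vert\theta-\hat{\theta}_n\Vert$ and balances radius against occupation time at $\beta=n^{(1-2\gamma)/6}$. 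Your third step does not supply a substitute: the telescoping identity $\theta_{t+1}-\theta_0=\sum_s\eta_s\gcd{\theta_s}$ controls the weighted average of the \emph{gradients}, not of the parameters, and converting one into the other requires linearizing $\Ex{\gcd{\theta}}$ with second-order errors that again need the very confinement you have not established.
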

This theorem asserts the existence of finite $m$ such that $\bar{\theta_t}$, the weighted average of $\{\theta_t\}_{t \ge 0}$, is an consistent estimate as long as the gradient ascent does not get $\theta_t$ stuck on the boundary of the parameter space.

\section{Conditioning on Data Sample}
Having a close look at the CD algorithm, we find that the sequence $\{\theta_t\}_{t \ge 0}$ is an inhomogeneous Markov chain conditional on a realization of the data sample $\mathbf{X}= \mathbf{x}$. This result is formally stated in Lemma \ref{lemma: Markov chain}, whose proof is provided in Appendix.
\begin{lemma} \label{lemma: Markov chain}
Denote by $\mathbb{P}^\mathbf{x}$ the conditional probability measure given a certain realization of data sample $\mathbf{x}$. For any (Borel) $A \subseteq \Theta$, $\Px{\theta_{t+1} \in A |\theta_t,\dots,\theta_0} = \Px{\theta_{t+1} \in A |\theta_t}$.
\end{lemma}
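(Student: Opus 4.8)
The plan is to exhibit the update $\theta_t \mapsto \theta_{t+1}$ as the application of a random map whose randomness at step $t$ is independent of the history $(\theta_0,\dots,\theta_t)$, which is the canonical route to the Markov property. First I would fix the realization $\mathbf{X}=\mathbf{x}$, so that under $\Px{\cdot}$ the empirical average $\frac{1}{n}\sum_{i=1}^n \phi(x_i)$ becomes a deterministic vector. The only remaining source of randomness in $\gcd{\theta_t}$ is then the collection of MCMC endpoints $X_i^{(m)}$, each produced by running the kernel $k_{\theta_t}$ for $m$ transitions starting from the \emph{fixed} datum $x_i$.

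The key observation is that these MCMC runs are performed afresh at every iteration $t$. I would make this explicit by introducing a driving sequence $U_0,U_1,\dots$ of mutually independent random elements (for instance, the uniform variates that drive the $nm$ Gibbs or Metropolis--Hastings transitions executed at a single step), with $U_t$ independent of $U_0,\dots,U_{t-1}$ and of the initial $\theta_0$. There is then a deterministic map $F_t$ with
\[
\theta_{t+1} = F_t(\theta_t, U_t) := \theta_t + \eta_t \gcd{\theta_t}\,\mathbb{I}\left(\theta_t \notin \partial\Theta_t\right),
\]
where the dependence of $\gcd{\theta_t}$ on $U_t$ enters only through the $m$-step kernel $k_{\theta_t}$ applied to the fixed starts $x_1,\dots,x_n$. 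Because the history $(\theta_0,\dots,\theta_t)$ is a measurable function of $(\theta_0,U_0,\dots,U_{t-1})$ while $U_t$ is independent of that $\sigma$-field, conditioning on the full history or on $\theta_t$ alone produces the same law for $\theta_{t+1}$:
\[
\Px{\theta_{t+1}\in A \mid \theta_t,\dots,\theta_0} = \Px{F_t(\theta_t,U_t)\in A \mid \theta_t} = \Px{\theta_{t+1}\in A \mid \theta_t},
\]
which is exactly the claimed property. I would also record that the transition kernel $P_t(\theta,A)=\Px{F_t(\theta,U_t)\in A}$ depends on $t$ through both $\eta_t$ and $\partial\Theta_t$, so the chain is \emph{inhomogeneous}, consistent with the paper's point that homogeneity is lost under annealing.

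The main obstacle is not the algebra but the careful construction of the probability space: I must justify that the MCMC randomness used at step $t$ is genuinely independent of everything generated before step $t$. This amounts to spelling out that the algorithm re-initializes the $n$ chains from the data at each iteration rather than continuing a single persistent chain, and that the $nm$ elementary transitions at step $t$ can be encoded in a single $U_t$ drawn independently of the past. Once the driving variables are in place and the measurability of $F_t$ in its first argument is checked, the Lipschitz continuity of $\theta\mapsto k_\theta$ from \ref{assumption: operator continuity} suffices to make this routine, and the independence then yields the Markov property immediately, without invoking the spectral-gap condition \ref{assumption: spectral gap} or the learning-rate condition \ref{assumption: learning rate}.
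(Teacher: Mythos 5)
Your proposal is correct and follows essentially the same route as the paper's own proof: the paper likewise argues that, given $\mathbf{X}=\mathbf{x}$ and $\theta_t$, the fresh MCMC samples $X_i^{(m)} \sim k_{\theta_t}^m(x_i,\cdot)$ drawn at step $t$ are conditionally independent of the history $\theta_0,\dots,\theta_{t-1}$, so conditioning on the full past and on $\theta_t$ alone give the same law for $\theta_{t+1}$. Your explicit driving-noise representation $\theta_{t+1}=F_t(\theta_t,U_t)$ is just a more formal packaging of the same idea (and, as a minor note, neither Lipschitz continuity of $\theta\mapsto k_\theta$ nor any other assumption beyond measurability is actually needed here).
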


We next impose two constraints (\ref{eqn: MLE}) and (\ref{eqn: empirical process}) on the data sample $\mathbf{X} \overset{\text{i.i.d.}}{\sim} p_{{\theta^*}}$ and show in Lemma \ref{lemma: data sample constraints} that both of them hold asymptotically in the limit of $n \to \infty$ with probability $1$. In the following sections, these two constraints on $\mathbf{X}=\mathbf{x}$ allow us to bound the bias of CD gradient under $\mathbb{P}^\mathbf{x}$, and results in the construction of two super-martingales.
\begin{lemma} \label{lemma: data sample constraints}
(Lemma 4.1 in \cite{wu2016convergence}) Assume \ref{assumption: X}, \ref{assumption: Theta}, \ref{assumption: concavity}, \ref{assumption: operator continuity}, and $X_1,...,X_n \sim p_{\theta^*}$ i.i.d.. Denote by $\hat{\theta}_n$ the MLE. Then for any $\gamma \in (0,1/2)$,
\begin{align}
& \sqrt{n}\Vert \hat{\theta}_n(X_1,\dots,X_n) - {\theta^*}\Vert < n^{\gamma} \label{eqn: MLE}\\
& \sup_{\theta \in \Theta} \sqrt{n}\left\Vert \frac{1}{n}\sum_{i=1}^n \intX \phi(y)k_\theta^m(X_i,y)dy - \intX \phi(y)k_\theta^m p_{{\theta^*}}(y) dy \right\Vert < n^{\gamma}. \label{eqn: empirical process}
\end{align}
hold asymptotically with probability 1. That is, $\lim_{n \to \infty} \mathbb{P}\left(X_1,\dots,X_n \text{ satisfy }(\ref{eqn: MLE}), (\ref{eqn: empirical process})\right) = 1$.
\end{lemma}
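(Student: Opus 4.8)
The plan is to establish the two constraints separately, since they concern different objects: constraint \eqref{eqn: MLE} is a statement about the deviation of the MLE from $\theta^*$, while constraint \eqref{eqn: empirical process} is a uniform-over-$\theta$ statement about an empirical process. For the first, I would invoke standard asymptotic MLE theory: under \ref{assumption: X}, \ref{assumption: Theta}, \ref{assumption: concavity}, the exponential family is regular, the log-likelihood is strictly concave (by positive-definiteness of $\nabla^2\Lambda$), and the MLE $\hat\theta_n$ is $\sqrt{n}$-consistent and asymptotically normal, so $\sqrt{n}\Vert\hat\theta_n-\theta^*\Vert$ converges in distribution to a (multivariate) Gaussian norm, hence is $O_P(1)$. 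Since $n^\gamma\to\infty$ for any $\gamma\in(0,1/2)$, tightness of $\sqrt{n}\Vert\hat\theta_n-\theta^*\Vert$ immediately gives $\mathbb{P}(\sqrt{n}\Vert\hat\theta_n-\theta^*\Vert\ge n^\gamma)\to0$, which is what \eqref{eqn: MLE} requires.

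For constraint \eqref{eqn: empirical process}, the object of interest is the centered empirical process
\[
\mathbb{G}_n(\theta)=\sqrt{n}\left(\frac{1}{n}\sum_{i=1}^n f_\theta(X_i)-\mathbb{E}_{\theta^*}f_\theta(X)\right),\qquad f_\theta(x)=\intX \phi(y)k_\theta^m(x,y)\,dy,
\]
and I must control $\sup_{\theta\in\Theta}\Vert\mathbb{G}_n(\theta)\Vert$. The natural route is empirical-process theory: show the function class $\mathcal{F}=\{f_\theta:\theta\in\Theta\}$ is Glivenko–Cantelli and in fact Donsker, so that $\sup_\theta\Vert\mathbb{G}_n(\theta)\Vert=O_P(1)$ and the probability it exceeds the diverging threshold $n^\gamma$ tends to $0$. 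Boundedness of the envelope follows from \ref{assumption: X}: since $|\phi_j|\le C$ and $k_\theta^m(x,\cdot)$ is a probability density, each $f_\theta$ is bounded componentwise by $C$, giving a uniformly bounded envelope. The complexity control comes from \ref{assumption: operator continuity}: because $\rho(k_\theta,k_{\theta'})\le\zeta\Vert\theta-\theta'\Vert$ and the $m$-step kernel composition is a contraction in the $\rho$ metric, one shows $\Vert f_\theta-f_{\theta'}\Vert_\infty\le m\,C\sqrt{d}\,\zeta\Vert\theta-\theta'\Vert$, i.e. the map $\theta\mapsto f_\theta$ is Lipschitz. A Lipschitz-indexed class over a compact (hence totally bounded) parameter set $\Theta\subset\mathbb{R}^d$ has polynomial covering numbers $N(\epsilon,\mathcal{F},\Vert\cdot\Vert_\infty)=O(\epsilon^{-d})$, so the uniform-entropy integral is finite and $\mathcal{F}$ is Donsker by standard maximal-inequality arguments (e.g. Theorem 2.5.2 in \cite{van1996weak}).

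The main obstacle is the $m$-step Lipschitz bound on $\theta\mapsto f_\theta$, which requires propagating the one-step kernel perturbation $\rho(k_\theta,k_{\theta'})$ through the $m$-fold composition $k_\theta^m$ while controlling the accumulated error. The key estimate is a telescoping decomposition $k_\theta^m-k_{\theta'}^m=\sum_{j=0}^{m-1}k_\theta^j(k_\theta-k_{\theta'})k_{\theta'}^{m-1-j}$, combined with the fact that each kernel acts as a contraction (or at least non-expansion) on bounded test functions in the $\rho$ metric; integrating the bounded statistic $\phi$ against the resulting difference yields the factor $m$ in the Lipschitz constant. Once this bound is in hand, the Donsker conclusion and the final probability statement are routine. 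Since both constraints each fail with probability tending to $0$, a union bound gives $\mathbb{P}(\mathbf{X}\text{ satisfies }\eqref{eqn: MLE},\eqref{eqn: empirical process})\to1$, completing the proof.
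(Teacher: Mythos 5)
Your proposal matches the paper's argument: the paper likewise dispatches \eqref{eqn: MLE} via standard MLE asymptotics (citing \cite{lehmann1991theory}) and handles \eqref{eqn: empirical process} by empirical-process theory, controlling the covering number of the class $\{f_\theta:\theta\in\Theta\}$ exactly as you do via the Lipschitz dependence on $\theta$ supplied by \ref{assumption: operator continuity}, with the full details deferred to \cite{wu2016convergence}. Your telescoping bound for the $m$-step kernel and the resulting $O(\epsilon^{-d})$ covering number are the right way to fill in the step the paper leaves implicit.
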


It follows from standard theorems for MLE \cite{lehmann1991theory} that (\ref{eqn: MLE}) holds asymptotically with probability 1. Letting $f_\theta: x \mapsto \intX \phi(y)k_\theta^m(x,y)dy$, (\ref{eqn: empirical process}) bounds the deviation of $\sum_{i=1}^n f_\theta(X_i)/n$ from its expectation $\mathbb{E} f_\theta(X_1) = \intX \phi(y)k_\theta^m p_{\theta^*}(y) dy$. We have to bound such deviations uniformly for all $\theta \in \Theta$ such that the bound is applicable to each of $\{\theta_t\}_{t \ge 0}$. Empirical process theory \cite{van1996weak} guarantees the concentration of $\sum_{i=1}^n f_\theta(X_i)/n$, by relating it to the covering number of function class $\{f_\theta: \theta \in \Theta\}$. A detailed proof can be found in \cite{wu2016convergence}.

\section{Bias of CD Gradient under $\mathbb{P}^\mathbf{x}$} 
This section studies the chain $\{\theta_t\}_{t \ge 0}$ under the conditional probability measure $\mathbb{P}^\mathbf{x}$ with $\mathbf{x}$ satisfying (\ref{eqn: MLE}) and (\ref{eqn: empirical process}). We have Lemma \ref{lemma: gradient bias} to bound the bias of CD gradient $\gcd{\theta}$ compared to the exact gradient $g(\theta) = \sum_{i=1}^n \phi(X_i)/n - \nabla \Lambda(\theta)$.
\begin{lemma}\label{lemma: gradient bias}
(Part of Lemma 5.1 in \cite{wu2016convergence}) Assume \ref{assumption: X}, \ref{assumption: Theta} and \ref{assumption: spectral gap} and that data sample $\mathbf{x}$ satisfies (\ref{eqn: MLE}) and (\ref{eqn: empirical process}). Denote by $\mathbb{E}^{\mathbf{x}}$ the expectation with respect to $\mathbb{P}^{\mathbf{x}}$. Then the bias of $g_\text{cd}$ has bounded magnitude
\begin{equation} \label{eqn: gradient bias}
\left\Vert \Ex{\gcd{\theta}-g(\theta)|\theta} \right\Vert
\leq  \left(1 + \sqrt{d}CL\alpha^m \right) n^{-1/2+\gamma} + \sqrt{d} CL\alpha^m \Vert \theta - \hat{\theta}_n\Vert
\end{equation}
where $\hat{\theta}_n$ is the MLE, $L$ is the Lipchitz constant of function (\ref{eqn: L}), $1-\alpha$ is the $\mathcal{L}_2$-spectral gap of MCMC operators $K_\theta$ in \ref{assumption: spectral gap}, and $\gamma \in (0,1/2)$ is introduced by (\ref{eqn: MLE}) and (\ref{eqn: empirical process}).
\end{lemma}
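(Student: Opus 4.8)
The plan is to compute the conditional expectation exactly and then split the resulting bias into an empirical-process piece and a deterministic MCMC-bias piece. Since $X_i^{(m)}$ is produced by running $m$ steps of the kernel $k_\theta$ started at $X_i$, conditioning on $\theta$ gives $\Ex{\phi(X_i^{(m)})\mid\theta} = \intX \phi(y)k_\theta^m(X_i,y)\,dy =: f_\theta(X_i)$, whereas $g(\theta)$ differs from $\gcd{\theta}$ only in replacing $\tfrac1n\sum_i\phi(X_i^{(m)})$ by $\nabla\Lambda(\theta)$. Hence
\[
\Ex{\gcd{\theta}-g(\theta)\mid\theta} = \nabla\Lambda(\theta) - \frac1n\sum_{i=1}^n f_\theta(X_i).
\]
Introducing $\mu_\theta := \intX \phi(y)k_\theta^m p_{\theta^*}(y)\,dy = \mathbb{E}_{\theta^*}[f_\theta(X_1)]$ and inserting $\pm\mu_\theta$, the triangle inequality bounds the norm by $\Vert \tfrac1n\sum_i f_\theta(X_i)-\mu_\theta\Vert + \Vert\nabla\Lambda(\theta)-\mu_\theta\Vert$. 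The first summand is controlled uniformly in $\theta$ by constraint (\ref{eqn: empirical process}), which gives $n^{-1/2+\gamma}$.

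The crux is the deterministic MCMC bias $\nabla\Lambda(\theta)-\mu_\theta$. Writing $(K_\theta f)(x)=\intX f(y)k_\theta(x,y)\,dy$ for the Markov operator, note $f_\theta=K_\theta^m\phi$ componentwise, and since $p_\theta$ is stationary for $K_\theta$ we have $\nabla_j\Lambda(\theta)=\mathbb{E}_\theta[\phi_j]=\mathbb{E}_\theta[K_\theta^m\phi_j]$. Thus for each coordinate $j$,
\[
\nabla_j\Lambda(\theta)-\mu_{\theta,j} = \intX (K_\theta^m\phi_j)(x)\,\bigl(p_\theta(x)-p_{\theta^*}(x)\bigr)\,dx.
\]
Because $p_\theta-p_{\theta^*}$ integrates to zero, I may replace $K_\theta^m\phi_j$ by the centered $h_j:=K_\theta^m(\phi_j-\mathbb{E}_\theta[\phi_j])$, rewrite the integral as $\mathbb{E}_\theta[h_j(1-p_{\theta^*}/p_\theta)]$, and apply Cauchy--Schwarz in $L_2(p_\theta)$ to get $|\nabla_j\Lambda(\theta)-\mu_{\theta,j}|\le \Vert h_j\Vert_{L_2(p_\theta)}\,\Vert 1-p_{\theta^*}/p_\theta\Vert_{L_2(p_\theta)}$. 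The \ref{assumption: spectral gap} spectral gap contracts the first factor, $\Vert h_j\Vert_{L_2(p_\theta)}=\Vert K_\theta^m(\phi_j-\mathbb{E}_\theta[\phi_j])\Vert_{L_2(p_\theta)}\le\alpha^m\Vert\phi_j-\mathbb{E}_\theta[\phi_j]\Vert_{L_2(p_\theta)}\le\alpha^m C$ by \ref{assumption: X}. A direct exponential-family computation identifies the second factor as $\Vert 1-p_{\theta^*}/p_\theta\Vert_{L_2(p_\theta)}=\sqrt{e^{-2\Lambda(\theta^*)+\Lambda(\theta)+\Lambda(2\theta^*-\theta)}-1}$, which is precisely function (\ref{eqn: L}); since that function vanishes at $\theta^*$ and is $L$-Lipschitz, it is at most $L\Vert\theta-\theta^*\Vert$. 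Summing over the $d$ coordinates gives $\Vert\nabla\Lambda(\theta)-\mu_\theta\Vert\le\sqrt d\,CL\alpha^m\Vert\theta-\theta^*\Vert$.

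Finally I would convert $\Vert\theta-\theta^*\Vert$ to $\Vert\theta-\hat\theta_n\Vert$ using constraint (\ref{eqn: MLE}), which yields $\Vert\hat\theta_n-\theta^*\Vert<n^{-1/2+\gamma}$ and hence $\Vert\theta-\theta^*\Vert\le\Vert\theta-\hat\theta_n\Vert+n^{-1/2+\gamma}$; adding the empirical-process bound and collecting the $n^{-1/2+\gamma}$ terms produces exactly $(1+\sqrt d\,CL\alpha^m)n^{-1/2+\gamma}+\sqrt d\,CL\alpha^m\Vert\theta-\hat\theta_n\Vert$. The main obstacle is the MCMC-bias step: the spectral-gap contraction lives in $L_2(p_\theta)$, the invariant measure of $K_\theta$, whereas $\mu_\theta$ averages $f_\theta$ against the data-generating measure $p_{\theta^*}$, so the two measures must be decoupled cleanly. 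The Cauchy--Schwarz splitting is what achieves this, and the payoff is that the change-of-measure cost collapses to the $\chi^2$-type quantity (\ref{eqn: L}), whose Lipschitz vanishing at $\theta^*$ is exactly what converts the bias into a term proportional to $\Vert\theta-\theta^*\Vert$ carrying the geometric factor $\alpha^m$.
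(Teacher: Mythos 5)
Your proposal is correct and follows essentially the same route as the paper: the identical decomposition via $\pm\intX \phi(y)k_\theta^m p_{\theta^*}(y)dy$, with the first term controlled by constraint (\ref{eqn: empirical process}) and the second by the $\mathcal{L}_2$-spectral-gap contraction combined with the $\chi^2$-type change-of-measure factor, which is exactly the function (\ref{eqn: L}) whose Lipschitz constant $L$ appears in the bound. You have in fact supplied the details (Cauchy--Schwarz in $L_2(p_\theta)$, centering, and the final MLE substitution) that the paper defers to \cite{wu2016convergence}, and they check out.
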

The idea is to decompose the bias into two parts
\begin{align*}
& \Ex{\gcd{\theta}-g(\theta)|\theta}
= \nabla \Lambda (\theta) - \frac{1}{n}\sum_{i=1}^n \intX \phi(y)k^m_\theta(x_i,y)dy\\
&=\left[\intX \phi(y)k_\theta^m p_{{\theta^*}}(y) dy - \frac{1}{n}\sum_{i=1}^n \intX \phi(y)k^m_\theta(x_i,y)dy \right] +\left[\nabla \Lambda (\theta) - \intX \phi(y)k_\theta^m p_{\theta^*}(y) dy\right].
\end{align*}
The first term has been bounded by $n^{-1/2+\gamma}$ in (\ref{eqn: empirical process}). For the second term, we use the fact that $p_\theta$ is the invariant distribution of transition kernel $k_\theta$ and write $\nabla \Lambda(\theta) = \intX \phi(y)p_{\theta}(y) dy = \intX \phi(y)k_\theta^m p_{\theta}(y) dy$. So the second term amounts to the error caused by the MCMC transitions starting from $p_{\theta^*}$ rather than the invariant distribution $p_\theta$. This error exponentially decays as $m$ increases, if $k_\theta$ has $\mathcal{L}_2$-spectral gap as assumed in \ref{assumption: spectral gap}. A detailed proof can be found in \cite{wu2016convergence}.

\section{Super-martingale Construction under $\mathbb{P}^\mathbf{x}$}
Lemma \ref{lemma: quadratic drift} studies the iterated decrement of $h^2(\theta_t) = \Vert \theta_{t} - \hat{\theta}_n \Vert^2$, and asserts that the dominant term of expected decrement $a_m h^2(\theta_t) - b_{n,m}h(\theta_t)$ is an opening-up quadratic function in $h(\theta_t) = \Vert \theta_t - \hat{\theta}_n\Vert$ if $m$ is large enough such that $a_m = \lambda - \sqrt{d}CL\alpha^m > 0$. The proof is provided in Appendix.
\begin{lemma} \label{lemma: quadratic drift}
Assume \ref{assumption: X}, \ref{assumption: Theta}, \ref{assumption: concavity},  \ref{assumption: spectral gap} and that data sample $\mathbf{x}$ satisfies (\ref{eqn: MLE}) and (\ref{eqn: empirical process}). Then $h(\theta) = \Vert \theta-\hat{\theta}_n\Vert$ satisfies
\begin{equation} \label{eqn: quadratic drift}
\Ex{h^2(\theta_{t+1}) | \theta_t } \leq h^2(\theta_t) - 2\eta_t \left[a_m h^2(\theta_t) - b_{n,m}h(\theta_t) \right] \mathbb{I}\left(\theta_t \not \in \partial \Theta_t\right) + 4d\eta_t^2C^2
\end{equation}
where
\[
a_m = \lambda -\sqrt{d}CL\alpha^m, \quad b_{n,m} = (1+\sqrt{d}CL\alpha^m)n^{-1/2+\gamma},
\]
$\lambda$ is defined in (\ref{eqn: lambda}) and $L$ is the Lipchitz constant of function (\ref{eqn: L}).
\end{lemma}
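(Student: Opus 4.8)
The plan is to condition on $\theta_t$, expand the squared distance produced by one CD update, and take the conditional expectation under $\Px{\cdot}$. First I would dispose of the boundary case: when $\theta_t \in \partial\Theta_t$ the update freezes, $\theta_{t+1}=\theta_t$, and the indicator in (\ref{eqn: quadratic drift}) vanishes, so the claimed inequality reduces to $h^2(\theta_t)\le h^2(\theta_t)+4d\eta_t^2C^2$, which holds trivially. It therefore suffices to treat the interior case $\theta_t\not\in\partial\Theta_t$, where $\theta_{t+1}=\theta_t+\eta_t\gcd{\theta_t}$.

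For the interior case I would write
\[
h^2(\theta_{t+1}) = \Vert\theta_t-\hat{\theta}_n\Vert^2 + 2\eta_t(\theta_t-\hat{\theta}_n)\cdot\gcd{\theta_t} + \eta_t^2\Vert\gcd{\theta_t}\Vert^2.
\]
The last term is at most $4d\eta_t^2C^2$ by the bound $\Vert\gcd{\theta}\Vert\le 2\sqrt{d}C$ already noted before (\ref{alg: CD gradient ascent}), which supplies the $\eta_t^2$ term of (\ref{eqn: quadratic drift}). Taking $\Ex{\cdot\,|\theta_t}$ leaves the cross term $2\eta_t(\theta_t-\hat{\theta}_n)\cdot\Ex{\gcd{\theta_t}|\theta_t}$, and I would analyze it through the decomposition $\gcd{\theta}=g(\theta)+[\gcd{\theta}-g(\theta)]$ used in Lemma \ref{lemma: gradient bias}.

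The curvature contribution comes from the exact gradient. Since $\hat{\theta}_n$ is the MLE, $g(\hat{\theta}_n)=0$, whence $g(\theta)=\nabla\Lambda(\hat{\theta}_n)-\nabla\Lambda(\theta)$. Applying the integral form of the mean value theorem to $\nabla\Lambda$ along the segment from $\hat{\theta}_n$ to $\theta_t$ and invoking the uniform eigenvalue lower bound (\ref{eqn: lambda}) on $\nabla^2\Lambda$ gives $(\theta_t-\hat{\theta}_n)\cdot g(\theta_t)\le -\lambda\, h^2(\theta_t)$. For the bias contribution I would apply Cauchy--Schwarz and then Lemma \ref{lemma: gradient bias}, obtaining $(\theta_t-\hat{\theta}_n)\cdot\Ex{\gcd{\theta_t}-g(\theta_t)|\theta_t}\le b_{n,m}\,h(\theta_t)+\sqrt{d}CL\alpha^m\, h^2(\theta_t)$. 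Summing the two contributions collapses the quadratic coefficients into $-(\lambda-\sqrt{d}CL\alpha^m)=-a_m$ and retains the linear term $b_{n,m}\,h(\theta_t)$; multiplying by $2\eta_t$ and reassembling with the $\eta_t^2$ bound and the indicator reproduces (\ref{eqn: quadratic drift}).

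The step I expect to be the crux is recognizing that the $\sqrt{d}CL\alpha^m\Vert\theta-\hat{\theta}_n\Vert$ summand of the bias in Lemma \ref{lemma: gradient bias} feeds a term proportional to $h^2(\theta_t)$ that erodes the curvature, so that the net quadratic coefficient is $\lambda-\sqrt{d}CL\alpha^m$ rather than $\lambda$. This is exactly why the hypothesis $a_m>0$ (equivalently, $m$ large enough) is required for the expected decrement to be an opening-up quadratic in $h(\theta_t)$; apart from this matching of the bias bound's two terms against the curvature and linear coefficients, the argument is routine algebra and two case analyses.
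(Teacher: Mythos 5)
Your proposal is correct and follows essentially the same route as the paper's own proof: the same case split on $\partial\Theta_t$, the same expansion of $h^2(\theta_{t+1})$, the same decomposition $\gcd{\theta}=g(\theta)+[\gcd{\theta}-g(\theta)]$ with the mean-value/eigenvalue bound $(\theta_t-\hat{\theta}_n)\cdot g(\theta_t)\le-\lambda h^2(\theta_t)$ and Cauchy--Schwarz applied to the bias bound of Lemma \ref{lemma: gradient bias}. Your closing observation about the $\sqrt{d}CL\alpha^m h^2(\theta_t)$ term eroding the curvature to yield $a_m=\lambda-\sqrt{d}CL\alpha^m$ is exactly the mechanism in the paper.
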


Denoting by $B$ the ball centering at the MLE $\hat{\theta}_n$ of radius $\beta b_{n,m}/ a_m$ for some $\beta>1$. As shown in Figure~\ref{fig: drift}, the expected decrement is at least $+\beta(\beta-1)b_{n,m}^2/a_m$ if $\theta_t \in B$, and it may be negative but lower bounded by $-b_{n,m}^2/4a_m$ if $\theta_t \in B^c$. Splitting apart the decrements inside/outside of $B$ constructs two super-martingales in Lemma \ref{lemma: super-martingales}. The proof is provided in Appendix.

\begin{figure}[h!] 
\center
@Misc{•,
OPTkey = {•},
OPTauthor = {•},
OPTtitle = {•},
OPThowpublished = {•},
OPTmonth = {•},
OPTyear = {•},
OPTnote = {•},
OPTannote = {•}
}
\includegraphics[width=0.9\textwidth]{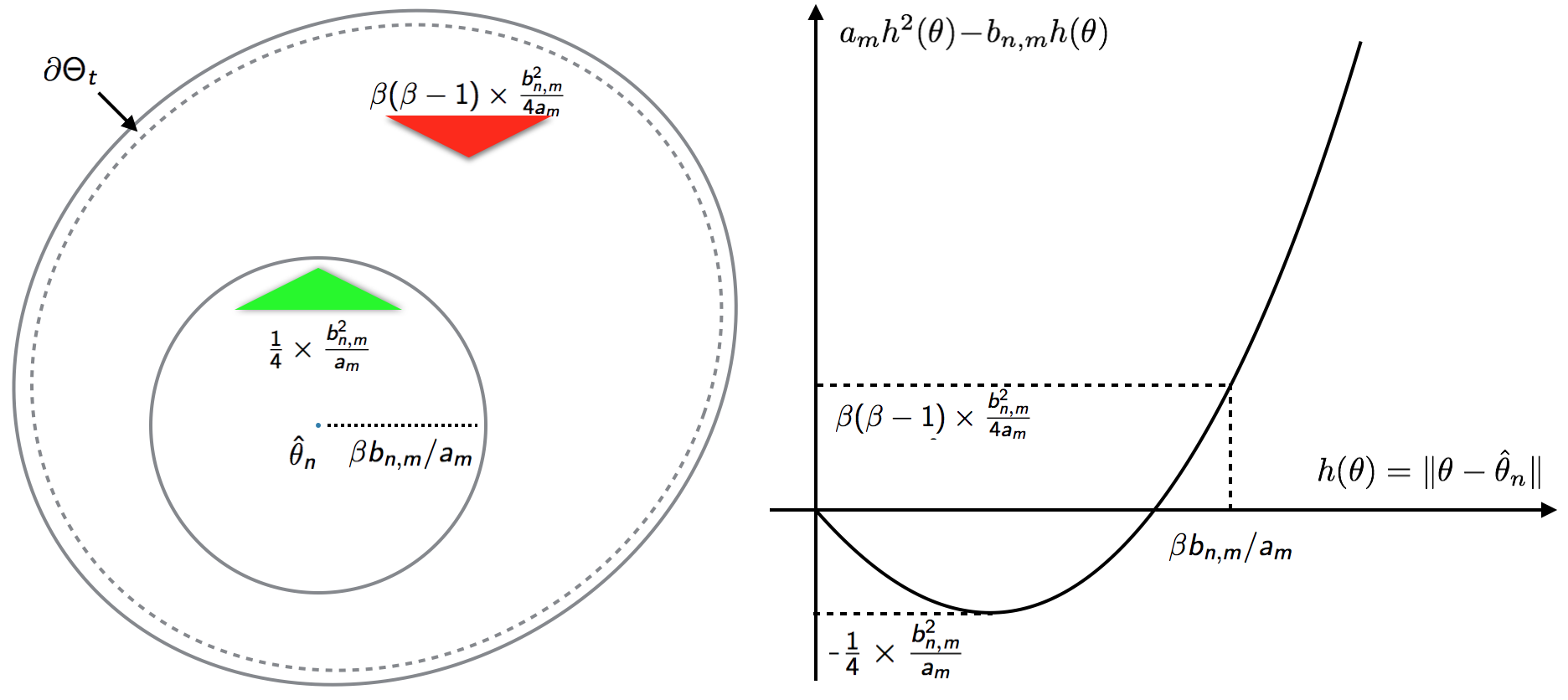}
\caption{Intuition of Lemmas \ref{lemma: quadratic drift} and \ref{lemma: super-martingales}.}\label{fig: drift}
\end{figure}

\begin{lemma} \label{lemma: super-martingales}
For any $\beta>1$, let $B = \{\theta \in \Theta: h(\theta) \le \beta b_{n,m} / a_m \}$. If $a_m = \lambda - \sqrt{d}CL\alpha^m > 0$ then it follows from Lemma \ref{lemma: quadratic drift} that both
\[
\left\{\sum_{s=0}^{t-1} Y_{s+1}\mathbb{I}\left(\theta_s \in \partial \Theta_s^c \cap B^c\right)\right\}_{t \ge 0} \quad \left\{\sum_{s=0}^{t-1} Z_{s+1}\mathbb{I}\left(\theta_s \in \partial \Theta_s \cup B\right)\right\}_{t \ge 0}
\]
are super-martingales (adapted to the natural filtration of $\{\theta_t\}_{t \ge 0}$) under $\mathbb{P}^\mathbf{x}$, where $Y_t$ and $Z_t$ are defined as below.
\begin{align}
Y_{t+1} &= h^2(\theta_{t+1}) - h^2(\theta_t) + 2\eta_t\beta(\beta-1)b^2_{n,m}/a_m - 4d\eta_t^2C^2,\\
Z_{t+1} &= h^2(\theta_{t+1}) - h^2(\theta_t) - \eta_tb^2_{n,m}/2a_m - 4d\eta_t^2C^2.
\end{align}
\end{lemma}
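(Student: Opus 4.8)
The plan is to read both super-martingale claims directly off the one-step drift inequality (\ref{eqn: quadratic drift}) by analyzing the sign and size of the opening-up quadratic $q(h) = a_m h^2 - b_{n,m}h$ on the ball $B$ and on its complement $B^c$. Writing $q(h) = a_m h\left(h - b_{n,m}/a_m\right)$, I note that its global minimum, attained at $h = b_{n,m}/(2a_m)$, equals $-b_{n,m}^2/(4a_m)$, whereas at the edge $h = \beta b_{n,m}/a_m$ of $B$ one computes $q = \beta(\beta-1)b_{n,m}^2/a_m$. Since $\beta>1$ places this edge strictly to the right of the vertex, $q$ is increasing there, so that $q(h) \ge \beta(\beta-1)b_{n,m}^2/a_m$ for every $\theta \in B^c$, while $q(h) \ge -b_{n,m}^2/(4a_m)$ for every $\theta \in \Theta$, in particular on $B$. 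These two deterministic facts, both of which use $a_m = \lambda - \sqrt{d}CL\alpha^m>0$ to guarantee the parabola genuinely opens upward, are what the whole argument rests on.

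First I would convert these bounds into one-step conditional estimates. On the event $\{\theta_t \in \partial\Theta_t^c \cap B^c\}$ the indicator in (\ref{eqn: quadratic drift}) equals $1$; subtracting $h^2(\theta_t)$ and inserting the definition of $Y_{t+1}$ gives
\[
\Ex{Y_{t+1}\mid\theta_t} \le 2\eta_t\left[\beta(\beta-1)b_{n,m}^2/a_m - q(h(\theta_t))\right] \le 0,
\]
the last step being exactly the $B^c$ bound. Likewise, on $\{\theta_t \in B\}$ with $\theta_t \notin \partial\Theta_t$, the same substitution together with the global bound $q(h(\theta_t)) \ge -b_{n,m}^2/(4a_m)$ yields
\[
\Ex{Z_{t+1}\mid\theta_t} \le -\eta_t\left[2q(h(\theta_t)) + b_{n,m}^2/(2a_m)\right] \le 0.
\]
The remaining case for $Z$ is $\theta_t \in \partial\Theta_t$, where the modified update (\ref{alg: CD gradient ascent}) freezes $\theta_{t+1}=\theta_t$, so $h^2(\theta_{t+1})=h^2(\theta_t)$ and $Z_{t+1} = -\eta_t b_{n,m}^2/(2a_m) - 4d\eta_t^2C^2 \le 0$ deterministically. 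Hence $\Ex{Z_{t+1}\mid\theta_t} \le 0$ throughout $\partial\Theta_t \cup B$.

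To assemble the processes I would exploit that each defining indicator, being a function of $\theta_t$, is measurable with respect to the natural filtration $\mathcal{F}_t = \sigma(\theta_0,\dots,\theta_t)$, so it factors out of the conditional expectation. Writing $S_t = \sum_{s=0}^{t-1} Y_{s+1}\mathbb{I}(\theta_s \in \partial\Theta_s^c \cap B^c)$ for the first candidate,
\[
\Ex{S_{t+1}\mid\mathcal{F}_t} - S_t = \mathbb{I}(\theta_t \in \partial\Theta_t^c \cap B^c)\,\Ex{Y_{t+1}\mid\mathcal{F}_t}.
\]
By the conditional Markov property of Lemma \ref{lemma: Markov chain}, $\Ex{Y_{t+1}\mid\mathcal{F}_t} = \Ex{Y_{t+1}\mid\theta_t}$, which is $\le 0$ exactly on the event carried by the indicator, so the product is $\le 0$ and $S_t$ is a super-martingale; the identical computation with $Z_{t+1}$ and the indicator $\mathbb{I}(\theta_s \in \partial\Theta_s \cup B)$ handles the second process. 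Integrability is immediate: $h^2$ is bounded on the compact $\Theta$ by \ref{assumption: Theta}, $\sum_t \eta_t^2 < \infty$ by \ref{assumption: learning rate}, and $a_m, b_{n,m}$ are fixed positive constants, so every partial sum lies in $L^1$.

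I do not expect a deep obstacle here, since the statement is essentially a bookkeeping reformulation of (\ref{eqn: quadratic drift}); the only points demanding care are fixing the direction of the two quadratic inequalities (which is precisely where $\beta>1$ and $a_m>0$ enter) and treating the frozen boundary dynamics separately in the $Z$ case, where the drift inequality with a zeroed indicator would already suffice but the exact freeze makes the bound transparent.
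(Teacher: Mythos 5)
Your proposal is correct and follows essentially the same route as the paper's own proof: apply the drift inequality (\ref{eqn: quadratic drift}), lower-bound the quadratic $a_m h^2 - b_{n,m}h$ by $\beta(\beta-1)b_{n,m}^2/a_m$ on $B^c$ and by its vertex value $-b_{n,m}^2/(4a_m)$ on $B$, handle the frozen case $\theta_t \in \partial\Theta_t$ separately, and use the conditional Markov property of Lemma \ref{lemma: Markov chain} to reduce to one-step conditional expectations. Your version is, if anything, slightly more careful than the paper's (you make the $\mathcal{F}_t$-measurability of the indicators and the integrability explicit, and you state the super-martingale increment conditions with the correct sign $\le 0$, whereas the paper's displayed conditions (\ref{eqn: Y})--(\ref{eqn: Z}) carry an apparent sign typo).
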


\section{Limiting Behaviors of $\theta_t$ under $\mathbb{P}^\mathbf{x}$}
Lemma \ref{lemma: super-martingale concentration} shows that the two super-martingales constructed in Lemma \ref{lemma: super-martingales} have bounded difference $\mathcal{O}(\eta_t)$ and further by the Azuma-Hoeffding inequality \cite{azuma1967weighted} and Borel-Cantelli Lemma \cite{durrett2010probability} that
\begin{align}
\limsup_{t \to \infty} \frac{\sum_{s=0}^{t-1} Y_{s+1}\mathbb{I}\left(\theta_s \in \partial \Theta_s^c \cap B^c\right)}{\sum_{s=0}^{t-1} \eta_s} \le 0, \label{eqn: Y limsup}\\
\limsup_{t \to \infty} \frac{\sum_{s=0}^{t-1} Z_{s+1}\mathbb{I}\left(\theta_s \in \partial\Theta_s \cup B\right)}{\sum_{s=0}^{t-1} \eta_s} \le 0. \label{eqn: Z limsup}
\end{align}
$\mathbb{P}^\mathbf{x}$-almost surely. Further, (\ref{eqn: Y limsup}) and (\ref{eqn: Z limsup}) imply
\begin{equation} \label{eqn: limit distribution}
\liminf_{t \to \infty} \frac{\sum_{s=0}^t \eta_s \mathbb{I}\left(\theta_s \in \partial \Theta_s \cup B\right)}{\sum_{s=0}^t \eta_s} \ge \frac{4\beta(\beta-1)}{4\beta(\beta-1)+1},
\end{equation}
$\mathbb{P}^\mathbf{x}$-almost surely. It suggests that: if the na\"ive gradient descent update does not get $\theta_t$ stuck at the boundary of the parameter space $\Theta$, that is $\sum_{t=0}^\infty \mathbb{I}\left(\theta_t \in \partial \Theta_t\right) < \infty$, $\theta_t$ stays a large proportion of time (weighted by $\eta_t$) in the ball $B$.


\begin{lemma} \label{lemma: super-martingale concentration}
Assume \ref{assumption: X}, \ref{assumption: Theta}, \ref{assumption: concavity},  \ref{assumption: spectral gap}, \ref{assumption: learning rate} and that data sample $\mathbf{x}$ satisfies (\ref{eqn: MLE}) and (\ref{eqn: empirical process}). If $\lambda - \sqrt{d}CL\alpha^m > 0$ then (\ref{eqn: Y limsup}), (\ref{eqn: Z limsup}) and (\ref{eqn: limit distribution}) hold $\mathbb{P}^\mathbf{x}$-almost surely.
\end{lemma}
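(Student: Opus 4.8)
The plan is to apply the Azuma--Hoeffding inequality to the two super-martingales furnished by Lemma~\ref{lemma: super-martingales} and then pass to almost-sure statements via the Borel--Cantelli lemma. First I would check that both partial sums have increments of order $\mathcal{O}(\eta_t)$. Writing $S_t^Y = \sum_{s=0}^{t-1} Y_{s+1}\mathbb{I}(\theta_s\in\partial\Theta_s^c\cap B^c)$, its $t$-th increment is either $0$ (when the indicator vanishes) or $Y_{t+1}$. On the event $\theta_t\notin\partial\Theta_t$ the update moves by $\Vert\theta_{t+1}-\theta_t\Vert\le 2\eta_t\sqrt{d}C$, and since $h(\theta_t)=\Vert\theta_t-\hat\theta_n\Vert$ is bounded by the diameter of the compact set $\Theta$ in \ref{assumption: Theta}, the identity $h^2(\theta_{t+1})-h^2(\theta_t)=2\langle\theta_t-\hat\theta_n,\,\theta_{t+1}-\theta_t\rangle+\Vert\theta_{t+1}-\theta_t\Vert^2$ shows $|h^2(\theta_{t+1})-h^2(\theta_t)|=\mathcal{O}(\eta_t)$; the remaining terms of $Y_{t+1}$ are $\mathcal{O}(\eta_t)$ and $\mathcal{O}(\eta_t^2)$. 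Hence the increment is dominated by some deterministic $c_t=\mathcal{O}(\eta_t)$ (with a constant depending on the fixed $n$), and the same bound holds for the increments of $S_t^Z$.

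Next, writing $\Sigma_t=\sum_{s=0}^{t-1}\eta_s$, the Azuma--Hoeffding bound for super-martingales gives $\mathbb{P}^\mathbf{x}(S_t^Y\ge\epsilon\Sigma_t)\le\exp(-\epsilon^2\Sigma_t^2/(2\sum_{s=0}^{t-1}c_s^2))$. The crucial point---and the step I expect to be the main obstacle---is to make this tail summable in $t$. Because $\sum_t\eta_t^2<\infty$ by \ref{assumption: learning rate}, the denominator $\sum_s c_s^2$ is bounded by a finite constant $V$, so the exponent is at most $-\epsilon^2\Sigma_t^2/(2V)$. This is where the other half of \ref{assumption: learning rate} is used: since $\Sigma_t/\sqrt{\log t}\to\infty$ (which also forces $\Sigma_t\to\infty$), for any fixed $\epsilon>0$ I may pick $c>\sqrt{2V}/\epsilon$ and $T$ with $\Sigma_t>c\sqrt{\log t}$ for $t>T$, so that the tail is at most $t^{-\epsilon^2 c^2/(2V)}$ with exponent exceeding $1$ and the series $\sum_t\mathbb{P}^\mathbf{x}(S_t^Y\ge\epsilon\Sigma_t)$ converges. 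Borel--Cantelli then yields $S_t^Y<\epsilon\Sigma_t$ eventually $\mathbb{P}^\mathbf{x}$-a.s.; intersecting over $\epsilon=1/k$, $k\in\mathbb{N}$, gives $\limsup_t S_t^Y/\Sigma_t\le 0$, which is (\ref{eqn: Y limsup}), and the identical argument gives (\ref{eqn: Z limsup}).

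Finally, to reach (\ref{eqn: limit distribution}) I would add the two partial sums and exploit that the indicator events $\{\partial\Theta_s^c\cap B^c\}$ and $\{\partial\Theta_s\cup B\}$ are complementary. Consequently the two $h^2(\theta_{s+1})-h^2(\theta_s)$ contributions combine into a telescoping sum equal to $h^2(\theta_t)-h^2(\theta_0)=\mathcal{O}(1)$, and the $\eta_s^2$ terms combine to $-4dC^2\sum_{s<t}\eta_s^2=\mathcal{O}(1)$. Dividing $S_t^Y+S_t^Z$ by $\Sigma_t$ and letting $t\to\infty$, these bounded terms vanish, and since $\limsup_t(S_t^Y+S_t^Z)/\Sigma_t\le 0$ by the two limits just established, the common positive factor $b_{n,m}^2/a_m$ cancels and I am left with $\limsup_t[\,2\beta(\beta-1)p_t^{\mathrm{out}}-\tfrac12 p_t^{\mathrm{in}}\,]\le 0$, where $p_t^{\mathrm{in}}$ and $p_t^{\mathrm{out}}=1-p_t^{\mathrm{in}}$ denote the $\eta$-weighted proportions of steps with $\theta_s\in\partial\Theta_s\cup B$ and with $\theta_s$ in its complement. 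Substituting $p_t^{\mathrm{out}}=1-p_t^{\mathrm{in}}$ turns this into $2\beta(\beta-1)-[\,2\beta(\beta-1)+\tfrac12\,]\liminf_t p_t^{\mathrm{in}}\le 0$, which rearranges to $\liminf_t p_t^{\mathrm{in}}\ge 4\beta(\beta-1)/(4\beta(\beta-1)+1)$, exactly (\ref{eqn: limit distribution}).
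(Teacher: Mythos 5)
Your proposal is correct and follows essentially the same route as the paper: bounded $\mathcal{O}(\eta_t)$ increments, Azuma--Hoeffding with summability of the tail coming from $\sum_t \eta_t^2<\infty$ and $\Sigma_t/\sqrt{\log t}\to\infty$, Borel--Cantelli for (\ref{eqn: Y limsup})--(\ref{eqn: Z limsup}), and then the complementarity of the two indicator events plus telescoping of $h^2$ to extract (\ref{eqn: limit distribution}). Your final step (summing $S_t^Y+S_t^Z$ directly and substituting $p_t^{\mathrm{out}}=1-p_t^{\mathrm{in}}$) is only a cosmetic rearrangement of the paper's identity involving $\sum_s Y_{s+1}/\Sigma_t \to 2\beta(\beta-1)b_{n,m}^2/a_m$ and $(Y_{s+1}-Z_{s+1})\mathbb{I}(\theta_s\in\partial\Theta_s\cup B)$.
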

\begin{proof}
We first show the two super-martingales has bounded differences $\mathcal{O}(\eta_t)$. Write
\[
h^2(\theta_{t+1}) - h^2(\theta_t) = 2 \eta_t \gcd{\theta_t} \cdot (\theta_t - \hat{\theta}_n) + \eta_t^2 \Vert \gcd{\theta_t} \Vert^2
\]
implying $|h^2(\theta_{t+1}) - h^2(\theta_t)| = \mathcal{O}(\eta_t)$, and further $|Y_{t+1}\mathbb{I}\left(\theta_t \in \partial \Theta_t^c \cap B^c\right)| \le H\eta_t$
for some constant $H>0$. Applying Azuma-Hoeffding inequality yields for any $\epsilon>0$,
\begin{align*}
\Px{\frac{\sum_{s=0}^{t-1} Y_{s+1}\mathbb{I}\left(\theta_s \in \partial \Theta_s^c \cap B^c\right)}{\sum_{s=0}^{t-1} \eta_s} \geq \epsilon}
&\leq \exp{\left(-\frac{\epsilon^2 (\sum_{s=0}^{t-1} \eta_s)^2}{2H^2\sum_{s=0}^{t-1}\eta_s^2}\right)}
\end{align*}
As assumed in \ref{assumption: learning rate} $\sum_{s=0}^\infty \eta_s^2 < \infty$, and $\sum_{s=0}^{t-1} \eta_s / \sqrt{\log t} \to \infty$. So the RHS $\leq \exp{(-2\log t)} = 1/t^2$ for sufficiently large $t$ and thus is summable. Applying Borel-Cantelli Lemma yields
\[
\Px{\frac{\sum_{s=0}^{t-1} Y_{s+1}\mathbb{I}\left(\theta_s \in \partial \Theta_s^c \cap B^c\right)}{\sum_{s=0}^{t-1} \eta_s} \geq \epsilon \text{ infinitely often}} = 0.\]
That is,
\[
\limsup_{t \to \infty} \frac{\sum_{s=0}^{t-1} Y_{s+1}\mathbb{I}\left(\theta_s \in \partial \Theta_s^c \cap B^c\right)}{\sum_{s=0}^{t-1} \eta_s} < \epsilon
\]
$\mathbb{P}^\mathbf{x}$-almost surely. Noting that $\epsilon$ is arbitrary, we have (\ref{eqn: Y limsup}). An analogous argument obtains (\ref{eqn: Z limsup}).
Next, noting boundedness of $h(\theta_t)$ and the fact that $\eta_t$ is not summable but square summable, we have
\begin{equation} \label{eqn: Y limit}
\frac{\sum_{s=0}^{t-1} Y_{s+1}}{\sum_{s=0}^{t-1} \eta_s} = \frac{h^2(\theta_t)-h^2(\theta_0)}{\sum_{s=0}^{t-1} \eta_s} + \frac{2\beta(\beta-1)b^2_{n,m}}{a_m} - \frac{4dC^2\sum_{s=0}^{t-1} \eta_s^2}{\sum_{s=0}^{t-1} \eta_s} \to \frac{2\beta(\beta-1)b^2_{n,m}}{a_m}
\end{equation}
$\mathbb{P}^\mathbf{x}$-almost surely as $t \to \infty$. (\ref{eqn: Y limsup}), (\ref{eqn: Z limsup}), (\ref{eqn: Y limit}) and the fact that
\begin{align*}
(Y_{s+1} - Z_{s+1})\mathbb{I}\left(\theta_s \in \partial \Theta_s \cup B\right) = Y_{s+1} - Y_{s+1}\mathbb{I}\left(\theta_s \in \partial \Theta_s^c \cap B^c\right) - Z_{s+1}\mathbb{I}\left(\theta_s \in \partial \Theta_s \cup B\right)
\end{align*}
imply
\[
\liminf_{t \to \infty} \frac{\sum_{s=0}^{t-1} (Y_{s+1} - Z_{s+1})\mathbb{I}\left(\theta_s \in \partial \Theta_s \cup B\right)}{\sum_{s=0}^{t-1} \eta_s} \ge \frac{2\beta(\beta-1)b^2_{n,m}}{a_m}.
\]
We divide both sides by $(2\beta(\beta-1)+1/2)b^2_{n,m}/a_m$ and yield (\ref{eqn: limit distribution}).
\end{proof}

\section{Convergence of CD to True Parameter}
So far we have (\ref{eqn: limit distribution}) to describe the behaviors of $\theta_t$ in the limit of $t \to \infty$ conditional on a particular data sample of size $n$. Lemma \ref{lemma: limsup bound} follows to give an upper bound for $h(\bar{\theta}_t) = \Vert \bar{\theta}_t - \hat{\theta}_n\Vert$ under $\mathbb{P}^\mathbf{x}$. Such a bound decays at a rate roughly $1/\sqrt[3]{n}$ as $n \to \infty$. The key is to let $\beta$ increase with $n$ at an appropriate rate such that the radius of $B$ vanishes as $n \to \infty$ while the proportion of time of $\theta_t \in B$ increases to 1. The convergence in (unconditional) probability result in Theorem \ref{theorem: main} is a consequence of Lemma \ref{lemma: limsup bound}.

\begin{lemma} \label{lemma: limsup bound}
From Lemma \ref{lemma: super-martingale concentration}, it follows that if $\lambda - \sqrt{d}CL\alpha^m > 0$ and $\sum_{t=0}^\infty \mathbb{I}(\theta_t \in \partial \Theta_t) < \infty$,
\[
\limsup_{t \to \infty} \Vert \bar{\theta}_t - \hat{\theta}_n \Vert = \mathcal{O}(n^{-(1-2\gamma)/3})
\]
$\mathbb{P}^\mathbf{x}$-almost surely. And the coefficient factor depends on $m$ but not data sample $\mathbf{x}$.
\end{lemma}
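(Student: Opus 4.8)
The plan is to reduce the weighted-average distance to a weighted average of the pointwise distances, split the time axis according to whether $\theta_s$ lies in the small ball $B$, and then tune the free radius parameter $\beta$ as a function of $n$. Writing $D = \mathrm{diam}(\Theta) < \infty$ (finite by \ref{assumption: Theta}), so that $h(\theta_s) \le D$ for all $s$, the first step is the triangle inequality applied to $\bar\theta_t - \hat\theta_n = \sum_{s=0}^t \eta_s(\theta_s - \hat\theta_n)/\sum_{s=0}^t \eta_s$, which yields
\[
\Vert \bar\theta_t - \hat\theta_n\Vert \le \frac{\sum_{s=0}^t \eta_s\, h(\theta_s)}{\sum_{s=0}^t \eta_s}.
\]
I would then split the numerator over the three disjoint index sets $\{\theta_s \in B\}$, $\{\theta_s \in \partial\Theta_s \setminus B\}$, and $\{\theta_s \in \partial\Theta_s^c \cap B^c\}$.

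On $\{\theta_s \in B\}$ one has $h(\theta_s) \le \beta b_{n,m}/a_m$, so this block contributes at most $\beta b_{n,m}/a_m$. On the boundary block $h \le D$, but the standing hypothesis $\sum_{s} \mathbb{I}(\theta_s \in \partial\Theta_s) < \infty$ makes the corresponding numerator a finite sum which, divided by $\sum_{s=0}^t \eta_s \to \infty$ (non-summability from \ref{assumption: learning rate}), vanishes as $t \to \infty$. On the last block, again $h \le D$, and since $\mathbb{I}(\theta_s \in \partial\Theta_s^c \cap B^c) = 1 - \mathbb{I}(\theta_s \in \partial\Theta_s \cup B)$, estimate (\ref{eqn: limit distribution}) bounds its weighted proportion by $1/(4\beta(\beta-1)+1)$. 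Collecting the three pieces gives, $\mathbb{P}^\mathbf{x}$-almost surely and for every fixed $\beta > 1$,
\[
\limsup_{t\to\infty}\Vert \bar\theta_t - \hat\theta_n\Vert \le \frac{\beta b_{n,m}}{a_m} + \frac{D}{4\beta(\beta-1)+1}.
\]

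The decisive step is the choice of $\beta = \beta_n$. Recalling $b_{n,m} = (1 + \sqrt d CL\alpha^m)\,n^{-1/2+\gamma} \asymp n^{-(1-2\gamma)/2}$, the first term grows like $\beta_n\, n^{-(1-2\gamma)/2}$ while the second decays like $D/(4\beta_n^2)$; balancing them forces $\beta_n^3 \asymp n^{(1-2\gamma)/2}$, i.e. $\beta_n \asymp n^{(1-2\gamma)/6}$ (for instance $\beta_n = (a_m/b_{n,m})^{1/3}$). With this choice both terms are of order $(b_{n,m}/a_m)^{2/3} = \mathcal{O}(n^{-(1-2\gamma)/3})$, and the implied constant is assembled from $D$, $a_m$, and $1 + \sqrt d CL\alpha^m$ alone — all depending on $m$ but not on $\mathbf{x}$ — giving the claimed uniform coefficient $K_m$.

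Two points I expect to need care. First, since the super-martingale construction and (\ref{eqn: limit distribution}) hold $\mathbb{P}^\mathbf{x}$-a.s. for each \emph{fixed} $\beta$, I must commit to a single $\beta_n$ per sample size $n$ rather than taking a limit over $\beta$ within one $n$, so that no uncountable union of null sets is incurred; the ball $B = B_n$ then shrinks (radius $\asymp n^{-(1-2\gamma)/3}$) while the weighted time spent inside it tends to $1$, which is precisely the tension the rate resolves. Second, I should verify rigorously that the finite boundary contribution is genuinely $o(1)$ after division by $\sum_{s=0}^t \eta_s$, which is where non-summability of $\eta_s$ and the finiteness of $\{s : \theta_s \in \partial\Theta_s\}$ are both invoked. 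The main obstacle is the $\beta_n$ optimisation itself: pinning the exponent $(1-2\gamma)/3$ exactly requires matching the linear-in-$\beta$ growth of the ball radius against the inverse-quadratic decay of the escape proportion, and every subsequent quantitative claim (the vanishing radius, the proportion tending to $1$, and the final rate) hinges on that balance.
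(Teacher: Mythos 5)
Your proposal is correct and follows essentially the same route as the paper's own proof: bound $\Vert\bar\theta_t-\hat\theta_n\Vert$ by the weighted average of $h(\theta_s)$ (convexity/triangle inequality), split according to $\theta_s\in B$ versus not, invoke (\ref{eqn: limit distribution}) to control the weighted proportion outside $B$ by $1/(4\beta(\beta-1)+1)$, and choose $\beta\asymp n^{(1-2\gamma)/6}$ to balance the ball radius against the escape proportion. Your explicit three-way split that isolates the finitely-many boundary visits, and your remark about fixing a single $\beta_n$ per $n$, are minor refinements of bookkeeping the paper leaves implicit; the substance is identical.
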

\begin{proof}
Using the convexity of $h(\theta)$, (\ref{eqn: limit distribution} and the assumption $\sum_{t=0}^\infty \mathbb{I}(\theta_t \in \partial \Theta_t) < \infty$ yields
\begin{align*}
\limsup_{t\to\infty}h(\bar{\theta}_t) &\leq \limsup_{t\to\infty}\frac{\sum_{s=0}^t \eta_s h(\theta_s)}{\sum_{s=0}^t \eta_s} \tag*{[convexity of $h(\theta)$ in $\theta$]}\\
&= \limsup_{t\to\infty}\frac{\sum_{s=0}^t \eta_s h(\theta_s) \mathbb{I}(\theta_s \in B)}{\sum_{s=0}^t \eta_s} + \limsup_{t\to\infty}\frac{\sum_{s=0}^t \eta_s h(\theta_s) \mathbb{I}(\theta_s \not \in B)}{\sum_{s=0}^t \eta_s}\\
&\le \frac{\beta b_{n,m}}{a_m}  + \max_{\theta \in \Theta} h(\theta) \times \limsup_{t\to\infty} \frac{\sum_{s=0}^t \eta_s \mathbb{I}(\theta_s \not \in B)}{\sum_{s=0}^t \eta_s}\\
&\le \frac{\beta b_{n,m}}{a_m}  + \max_{\theta \in \Theta} h(\theta) \times \frac{1}{4\beta(\beta-1)+1}.
\end{align*}
The desired bound $\mathcal{O}(n^{-(1-2\gamma)/3})$ is obtained by letting $\beta = n^{(1-2\gamma)/6}$. If so, the radius of $B$ is $\frac{\beta b_{n,m}}{a_m} \asymp n^{(1-2\gamma)/6} \times n^{-1/2+\gamma}= n^{-(1-2\gamma)/3}$, and the proportion of time of $\theta_t \not \in B$ is at most $\frac{1}{4\beta(\beta-1)+1} \asymp n^{-(1-2\gamma)/3}$. The coefficient factor $\frac{1+\sqrt{d}CL\alpha^m}{\lambda - \sqrt{d}CL\alpha^m} + \frac{1}{4}\max_{\theta, \theta' \in \Theta} \Vert \theta - \theta'\Vert$ depends on $m$ but not the data sample $\mathbf{x}$.
\end{proof}

\begin{proof}[Proof of Theorem \ref{theorem: main}]
By Lemma \ref{lemma: data sample constraints},
\[
\lim_{n \to \infty} \mathbb{P}\left(\left. X_1,\dots,X_n \text{ satisfies (\ref{eqn: MLE}), (\ref{eqn: empirical process})} \right| \theta_t \in \partial \Theta_t \text{ finitely often} \right) = 1
\]
if $\mathbb{P}\left(\theta_t \in \partial \Theta_t \text{ finitely often} \right) > 0$. It suffices to show that for any $\mathbf{x}$ satisfying (\ref{eqn: MLE}) and (\ref{eqn: empirical process})
\[
\lim_{n \to \infty} \Px{\left. \limsup_{t \to \infty} \Vert \bar{\theta}_t - \theta^*\Vert \le K_m n^{-(1-2\gamma)/3} \right| \theta_t \in \partial \Theta_t \text{ finitely often}} = 1
\]

This is an immediate consequence of the bound $\mathcal{O}(n^{-(1-2\gamma)/3})$ given by Lemma \ref{lemma: limsup bound} and the fact that
\[
\Vert \bar{\theta}_t - \theta^*\Vert \le \Vert \hat{\theta}_n - \theta^*\Vert + \Vert \bar{\theta}_t - \hat{\theta}_n\Vert \le n^{-1/2+\gamma} + \mathcal{O}(n^{-(1-2\gamma)/3}).
\]
\end{proof}

\section{Example: CD for fully-visible Boltzmann Machine}
 An example satisfying assumptions \ref{assumption: X}, \ref{assumption: Theta}, \ref{assumption: concavity}, \ref{assumption: operator continuity} and \ref{assumption: spectral gap} is Gibbs sampling with random scan for fully-visible Boltzmann Machine (details are discussed in Appendix). To demonstrate our theoretical results, we give experimental results of CD in a fully-visible $2 \times 2$ Boltzmann Machine
\[
p_\theta(x_1,x_2) \propto \exp{\left(\left[\begin{array}{c} x_1\\x_2\end{array}\right]^T\left[\begin{array}{c c}\theta^{(1)} & \theta^{(2)}/2\\ \theta^{(2)}/2 & \theta^{(3)}\end{array}\right]\left[\begin{array}{c} x_1\\x_2\end{array}\right]\right)} = \exp{\left(\theta^{(1)}x_1^2+\theta^{(2)}x_1x_2+\theta^{(3)}x_2^2\right)}
\]
with $x_1,x_2 \in \{0,1\}$. Three data sets of size $n=10^2,10^3,10^4$ are sampled with true parameter $\theta^* = (0.5,1.0,0.5)^T$. Then CD-2 and CD-4 run $t=1000$ iterations of updates with $\eta_t = 1/t$ and generate a sequence of parameter estimates $\{\theta_t\}_{0 \le t \le 1000}$. We drop the first $t_0 = 50$ estimates\footnote{As mentioned in the introduction section, we assume $t_0 = 0$ in the theoretical part of this paper only for aesthetics of mathematical proof. The convergence results hold for any $t_0$.}, and plot in Figure \ref{fig: experiments} the distance from $\bar{\theta}_t = \sum_{s=50}^t \eta_s\theta_s/\sum_{s=50}^t \eta_s$ to the true parameter $\theta^*$.
 
\begin{figure}[h!] 
\center
\includegraphics[width=0.88\textwidth]{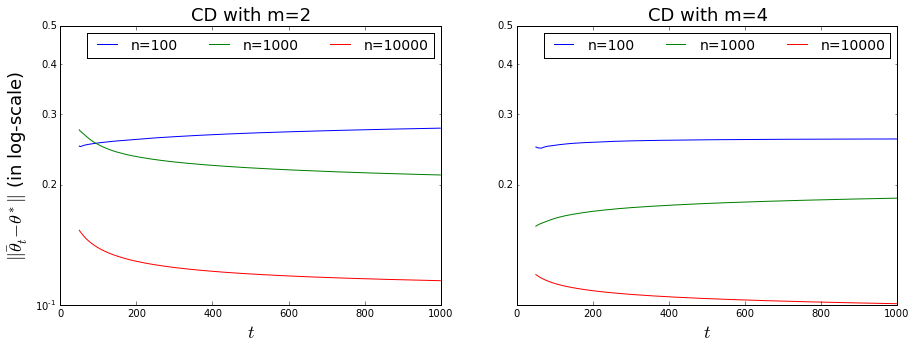}
\caption{For each fixed $n=10^2, 10^3, 10^4$, $\bar{\theta}_t$ converges to some limit point(s) as $t \to \infty$, which is not $\theta^*$. But $\delta_n = \limsup_{t \to \infty} \Vert \bar{\theta}_t - \theta^*\Vert$ decreases as $n$ increases. The effect of $m=2$ or $4$ is not significant as it only changes the coefficient factor of the convergence rate. }\label{fig: experiments}
\end{figure}

\small
\bibliographystyle{unsrtnat}
\bibliography{ref} 

\begin{thebibliography}{18}
\providecommand{\natexlab}[1]{#1}
\providecommand{\url}[1]{\texttt{#1}}
\expandafter\ifx\csname urlstyle\endcsname\relax
  \providecommand{\doi}[1]{doi: #1}\else
  \providecommand{\doi}{doi: \begingroup \urlstyle{rm}\Url}\fi

\bibitem[Hinton(2002)]{hinton2002training}
Geoffrey~E Hinton.
\newblock Training products of experts by minimizing contrastive divergence.
\newblock \emph{Neural computation}, 14\penalty0 (8):\penalty0 1771--1800,
  2002.

\bibitem[Hinton et~al.(2006)Hinton, Osindero, and Teh]{hinton2006fast}
Geoffrey~E Hinton, Simon Osindero, and Yee-Whye Teh.
\newblock A fast learning algorithm for deep belief nets.
\newblock \emph{Neural computation}, 18\penalty0 (7):\penalty0 1527--1554,
  2006.

\bibitem[He et~al.(2004)He, Zemel, and
  Carreira-Perpi{\~n}{\'a}n]{he2004multiscale}
Xuming He, Richard~S Zemel, and Miguel~{\'A} Carreira-Perpi{\~n}{\'a}n.
\newblock Multiscale conditional random fields for image labeling.
\newblock In \emph{Computer vision and pattern recognition, 2004. CVPR 2004.
  Proceedings of the 2004 IEEE computer society conference on}, volume~2, pages
  II--695. IEEE, 2004.

\bibitem[Roth and Black(2005)]{roth2005fields}
Stefan Roth and Michael~J Black.
\newblock Fields of experts: A framework for learning image priors.
\newblock In \emph{Computer Vision and Pattern Recognition, 2005. CVPR 2005.
  IEEE Computer Society Conference on}, volume~2, pages 860--867. IEEE, 2005.

\bibitem[MacKay(2001)]{mackay2001failures}
David MacKay.
\newblock Failures of the one-step learning algorithm.
\newblock In \emph{Available electronically at http://www. inference. phy. cam.
  ac. uk/mackay/abstracts/gbm. html}. Citeseer, 2001.

\bibitem[Teh et~al.(2003)Teh, Welling, Osindero, and Hinton]{teh2003energy}
Yee~Whye Teh, Max Welling, Simon Osindero, and Geoffrey~E Hinton.
\newblock Energy-based models for sparse overcomplete representations.
\newblock \emph{The Journal of Machine Learning Research}, 4:\penalty0
  1235--1260, 2003.

\bibitem[Williams and Agakov(2002)]{williams2002analysis}
Christopher~KI Williams and Felix~V Agakov.
\newblock An analysis of contrastive divergence learning in gaussian boltzmann
  machines.
\newblock \emph{Institute for Adaptive and Neural Computation}, 2002.

\bibitem[Yuille(2005)]{yuille2005convergence}
Alan~L Yuille.
\newblock The convergence of contrastive divergences.
\newblock In \emph{Advances in Neural Information Processing Systems}, pages
  1593--1600, 2005.

\bibitem[Sutskever and Tieleman(2010)]{sutskever2010convergence}
Ilya Sutskever and Tijmen Tieleman.
\newblock On the convergence properties of contrastive divergence.
\newblock In \emph{International Conference on Artificial Intelligence and
  Statistics}, pages 789--795, 2010.

\bibitem[Hyv{\"a}rinen(2006)]{hyvarinen2006consistency}
Aapo Hyv{\"a}rinen.
\newblock Consistency of pseudolikelihood estimation of fully visible boltzmann
  machines.
\newblock \emph{Neural Computation}, 18\penalty0 (10):\penalty0 2283--2292,
  2006.

\bibitem[Wu et~al.(2016)Wu, Jiang, Jin, and Wong]{wu2016convergence}
Tung-Yu Wu, Bai Jiang, Yifan Jin, and Wing~H Wong.
\newblock Convergence of contrastive divergence algorithm in exponential
  family.
\newblock \emph{arXiv preprint arXiv:1603.05729v2}, 2016.

\bibitem[Rudolf(2011)]{rudolf2011explicit}
Daniel Rudolf.
\newblock Explicit error bounds for markov chain monte carlo.
\newblock \emph{arXiv preprint arXiv:1108.3201}, 2011.

\bibitem[Van Der~Vaart and Wellner(1996)]{van1996weak}
Aad~W Van Der~Vaart and Jon~A Wellner.
\newblock \emph{Weak Convergence}.
\newblock Springer, 1996.

\bibitem[Roberts et~al.(2004)Roberts, Rosenthal, et~al.]{roberts2004general}
Gareth~O Roberts, Jeffrey~S Rosenthal, et~al.
\newblock General state space markov chains and mcmc algorithms.
\newblock \emph{Probability Surveys}, 1:\penalty0 20--71, 2004.

\bibitem[Paulin(2012)]{paulin2012concentration}
Daniel Paulin.
\newblock Concentration inequalities for markov chains by marton couplings and
  spectral methods.
\newblock \emph{arXiv preprint arXiv:1212.2015}, 2012.

\bibitem[Lehmann et~al.(1991)Lehmann, Casella, and Casella]{lehmann1991theory}
Erich~Leo Lehmann, George Casella, and George Casella.
\newblock \emph{Theory of point estimation}.
\newblock Wadsworth \& Brooks/Cole Advanced Books \& Software, 1991.

\bibitem[Azuma(1967)]{azuma1967weighted}
Kazuoki Azuma.
\newblock Weighted sums of certain dependent random variables.
\newblock \emph{Tohoku Mathematical Journal, Second Series}, 19\penalty0
  (3):\penalty0 357--367, 1967.

\bibitem[Durrett(2010)]{durrett2010probability}
Rick Durrett.
\newblock \emph{Probability: theory and examples}.
\newblock Cambridge university press, 2010.

\end{thebibliography}

\section{Appendix}
\subsection*{Proof of Lemma 3.1}
\begin{proof}
It is clearly true if $\theta_t \in \partial \Theta_t$. If $\theta_t \not \in \partial \Theta_t$, $\theta_{t+1}$ in the CD update equation (\ref{alg: CD gradient ascent}) is function of $\theta_t$, $\mathbf{X}$ and a random sample $X^{(m)}_i$. And $X^{(m)}_i \sim k_{\theta_t}^m(x_i,\cdot)$ are conditionally independent to the history of $\theta_0,\dots,\theta_{t-1}$ given $\mathbf{X}=\mathbf{x}$ and $\theta_t$. For any $A \subseteq \Theta$,
\begin{align*} \Px{\theta_{t+1} \in A|\theta_t, \dots,\theta_0}&= \P{\theta_{t+1} \in A|\theta_t, \dots,\theta_0, \mathbf{X}=\mathbf{x}}\\ &= \P{\mathbf{X}^{(m)} \in g_\text{cd}^{-1}((A - \theta_t)/\eta_t) |\theta_t, \dots,\theta_0, \mathbf{X}=\mathbf{x}}\\&= \P{\mathbf{X}^{(m)} \in g_\text{cd}^{-1}((A - \theta_t)/\eta_t) |\theta_t, \mathbf{X}=\mathbf{x}}\\&= \P{\theta_{t+1} \in A |\theta_t, \mathbf{X}=\mathbf{x}}\\&=\Px{\theta_{t+1} \in A|\theta_t}\end{align*}
\end{proof}

\subsection*{Proof of Lemma 5.1}
\begin{proof}
Lemma \ref{lemma: Markov chain} has shown that $\{\theta_t\}_{t \ge 0}$ is an inhomogeneous Markov chain. It suffices to show
\begin{align}
\Ex{Y_{t+1}\mathbb{I}\left(\theta_t \in \partial \Theta_t^c \cap B^c\right)|\theta_t} &\ge 0 \label{eqn: Y},\\
\Ex{Z_{t+1}\mathbb{I}\left(\theta_t \in \partial \Theta_t \cup B\right)|\theta_t} &\ge 0. \label{eqn: Z}
\end{align}
Indeed, if $\theta_t \in \partial \Theta_t^c \cap B^c$ then $h(\theta_t) \ge \beta b_{n,m}/a_m$ implies $a_m h^2(\theta_t) - b_{n,m}h(\theta_t) \ge \beta(\beta-1)b^2_{n,m}/a_m$, which together with (\ref{eqn: quadratic drift}) further implies
\[
\Ex{h^2(\theta_{t+1}) | \theta_t } \leq h^2(\theta_t) - 2\beta(\beta-1)\eta_tb^2_{n,m}/a_m  + 4d\eta_t^2C^2,
\]
completing the proof of (\ref{eqn: Y}). Analogously, if $\theta_t \in B$ then $h(\theta_t) \le \beta b_{n,m}/a_m$ implies $a_m h^2(\theta_t) - b_{n,m}h(\theta_t) \ge -\eta_tb^2_{n,m}/4a_m$, which together with (\ref{eqn: quadratic drift}) further implies
\[
\Ex{h^2(\theta_{t+1}) | \theta_t } \leq h^2(\theta_t) + \eta_tb^2_{n,m}/2a_m  + 4d\eta_t^2C^2.
\]
It with the fact that $\Ex{h^2(\theta_{t+1}) | \theta_t } = h^2(\theta_t)$ if $\theta_t \in \partial \Theta_t$, completes the proof of (\ref{eqn: Z}).
\end{proof}

\subsection*{Proof of Lemma 5.2}
\begin{proof}
If $\theta_t \in \partial \Theta_t$, (\ref{eqn: quadratic drift}) trivially hold. If $\theta_t \not \in \partial \Theta_t$,
\begin{align*}
h^2(\theta_{t+1}) &= h^2(\theta_{t+1} + \eta_t \gcd{\theta_t})\\
&= h^2(\theta_t) + 2\eta_t \gcd{\theta_t} \cdot (\theta_t - \hat{\theta}_n) + \eta_t^2 \Vert \gcd{\theta_t}\Vert^2\\
&= h^2(\theta_t) + 2\eta_t g(\theta_t) \cdot (\theta_t - \hat{\theta}_n) + 2\eta_t [\gcd{\theta_t} - g(\theta_t)] \cdot (\theta_t - \hat{\theta}_n) + \eta_t^2 \Vert \gcd{\theta_t}\Vert^2\\
&\le h^2(\theta_t) - 2\eta_t \lambda h^2(\theta_t) + 2\eta_t [\gcd{\theta_t} - g(\theta_t)] \cdot (\theta_t - \hat{\theta}_n) + 4d\eta_t^2C^2
\end{align*}
where the last step follows from the facts that $g(\theta_t) \cdot (\theta_t - \hat{\theta}_n) = -(\theta_t - \hat{\theta}_n)^T\nabla^2 \Lambda(\theta')(\theta_t - \hat{\theta}_n) \leq -\lambda h^2(\theta_t)$ with some $\theta'$ between $\theta_t$ and $\hat{\theta}_n$ and that $\Vert \gcd{\theta} \Vert \leq 2\sqrt{d}C$. Taking conditional expectation $\Ex{\cdot|\theta_t}$ on both sides yields
\begin{align*}
\Ex{h^2(\theta_{t+1})|\theta_t} &\le h^2(\theta_t)-2\eta_t \lambda h^2(\theta_t) + 2\eta_t \Ex{\gcd{\theta_t} - g(\theta_t)|\theta_t} \cdot (\theta_t - \hat{\theta}_n) + 4d\eta_t^2C^2\\
&\le h^2(\theta_t)-2\eta_t \lambda h^2(\theta_t) + 2\eta_t \Vert \Ex{\gcd{\theta_t} - g(\theta_t)|\theta_t}\Vert \times h(\theta_t) + 4d\eta_t^2C^2
\end{align*}
Using Lemma \ref{lemma: gradient bias} and rearranging terms yields (\ref{eqn: quadratic drift}) as desired. 
\end{proof}

\subsection*{Gibbs Sampling for Fully-visible Boltzmann Machine}
With $x \in \{0,1\}^p$ and a symmetric matrix $W_{p \times p}$, fully-visible Boltzmann Machine is given by
\[
p_W(x) \propto \exp{(x^TWx)} = \exp{\left(\sum_{j=1}^p W_{jj}x_j^2 + \sum_{1 \le j < k \le p} 2W_{jk} x_jx_k\right)}
\]
which apparently belongs to an exponential family satisfying \ref{assumption: X} and \ref{assumption: concavity}, with $d = p(p+1)/2$, $\phi(x) = (x_j^2, 1 \le j \le p; x_jx_k, 1 \le j < k \le p)$, $\theta = (W_{jj}, 1 \le j \le p; 2W_{jk}, 1 \le j < k \le p)$, $C = 1$. Let $\Theta$ to be the set of $\theta$ such that $W(\theta)$ has bounded Frobenius norm $\Vert W \Vert_\text{F} = \sqrt{\text{trace}(W^TW)} \le M$, then $\Theta$ is compact as required in \ref{assumption: Theta}. The probabilities $k_\theta(x,x')$ of the Gibbs sampler flipping $x_j \to x_j'$ are continuously differentiable in $\theta$ (or equivalent $W$) on compact set $\Theta$, and thus Lipchitz continuous in $\theta$. Therefore \ref{assumption: operator continuity} is satisfied as
\[
\rho(k_\theta, k_{\theta'}) \leq \sup_{x \in \{0,1\}^p} \sum_{x' \in \{0,1\}^p} |k_\theta(x,x')-k_{\theta'}(x,x')| \leq \zeta \Vert \theta - \theta'\Vert.
\]
for some $\zeta$. Moreover, a Gibbs sampler with random scan generates a uniform ergodic, reversible Markov chain which has $\mathcal{L}_2$-spectral gap as required in \ref{assumption: spectral gap}.
\end{document}